\documentclass[nohyperref]{article}

\usepackage{microtype}
\usepackage{graphicx}
\usepackage{caption}
\usepackage{subcaption}
\usepackage{booktabs} 
\usepackage{IEEEtrantools}

\newenvironment{talign}
 {\align}
 {\endalign}
\newenvironment{talign*}
 {\csname align*\endcsname}
 {\endalign}

\usepackage{hyperref}
\hypersetup{colorlinks=false,linkcolor=false,pdfborderstyle={/S/U/W 0}} 



\usepackage[accepted]{icml2023}

\usepackage{amsmath}
\usepackage{amssymb}
\usepackage{mathtools}
\usepackage{amsthm}
\usepackage{romannum}
\AtBeginDocument{\pagenumbering{arabic}}
\usepackage[capitalize,noabbrev]{cleveref}

\theoremstyle{plain}
\newtheorem{theorem}{Theorem}[section]
\newtheorem{proposition}[theorem]{Proposition}
\newtheorem{lemma}[theorem]{Lemma}

\theoremstyle{definition}

\theoremstyle{remark}


\def\R{{\mathbb{R}}}

\def\E{{\mathbb{E}}}
\def\X{{\mathcal{X}}}
\def\P{{\mathbb{P}}}

\def\O{{\mathcal{O}}}
\DeclareMathOperator{\Tr}{Tr}
\DeclareMathOperator{\diag}{diag}
\def\DSM{{\mathcal{D}}}
\def\TDSM{{\widehat{\mathcal{D}}_m}}

\DeclareMathOperator*{\argmin}{arg\,min}
\renewcommand{\qed}{\hfill$\blacksquare$}

\definecolor{blue_plot}{RGB}{55,  126, 184}
\definecolor{orange_plot}{RGB}{255, 127, 0}
\definecolor{green_plot}{RGB}{77,  175, 74}
\definecolor{pink_plot}{RGB}{247, 129, 191}


\icmltitlerunning{Robust and Scalable Bayesian Online Changepoint Detection}

\begin{document}

\twocolumn[
\icmltitle{Robust and Scalable Bayesian Online Changepoint Detection}



\icmlsetsymbol{equal}{*}

\begin{icmlauthorlist}
\icmlauthor{Matias Altamirano}{ucl}
\icmlauthor{François-Xavier Briol}{ucl,tur}
\icmlauthor{Jeremias Knoblauch}{ucl}
\end{icmlauthorlist}

\icmlaffiliation{ucl}{Department of Statistical Science, University College London, London, United Kingdom}
\icmlaffiliation{tur}{The Alan Turing Institute, London, United Kingdom}

\icmlcorrespondingauthor{Matias Altamirano}{matias.altamirano.22@ucl.ac.uk}
\icmlcorrespondingauthor{François-Xavier Briol}{f.briol@ucl.ac.uk}
\icmlcorrespondingauthor{Jeremias Knoblauch}{j.knoblauch@ucl.ac.uk}

\icmlkeywords{changepoint detection, generalised Bayes, score-matching, robust Bayes}

\vskip 0.3in
]



\printAffiliationsAndNotice{}  

\begin{abstract}
This paper proposes an online, provably robust, and scalable Bayesian approach for changepoint detection.
The resulting algorithm has key advantages over previous work: 
it provides provable robustness by leveraging the generalised Bayesian perspective, 
and also addresses the scalability issues of previous attempts.
Specifically,
the proposed generalised Bayesian formalism leads to conjugate posteriors whose parameters are available in closed form by leveraging diffusion score matching.
The resulting algorithm is exact, 
can be updated through simple algebra, 
and is more than 10 times faster than its closest competitor.
\end{abstract}

\section{Introduction}

Changepoint (CP) detection is the task of identifying sudden changes in the statistical properties of a data stream. The methods to detect CPs are used in applications including systems health monitoring \citep{stival2022doubly,yang2006adaptive},  financial data \cite{kim2022unsupervised,kummerfeld2013tracking},  climate change \citep{reeves2007review, itoh2010change}, and cyber security \citep{hallgren2022changepoint}. Existing approaches include likelihood ratio methods such as the parametric method CUSUM \citep{page1954continuous} or Change Finder methods \cite{kawahara2009change}, to  Bayesian methods such as in \citet{chib1998estimation, fearnhead2006exact}. 
 
Detecting CPs in an online fashion is an even more challenging task, but can allow practitioners to act on these systems in real-time.
In a Bayesian context, the most popular method is \emph{Bayesian online changepoint detection (BOCD)} \citep{adams2007bayesian, fearnhead2007line}. 
Here, the data stream is assumed to come from one of several different underlying distributions; and the goal is to quantify our uncertainty over the most recent time at which the data distribution changed. 
BOCD has many desirable properties: it is suitable for multivariate data and has the capacity to quantify uncertainty. 
However,     it also has a significant flaw inherited from Bayesian inference: it is not robust under outliers or model misspecification. 
This can lead to failures, where most data points inferred to be CPs are simply mild heterogeneities in the data.
This is a significant problem, and can causes practitioners to act on safety-critical systems based upon an erroneously declared CPs.

The lack of robustness in Bayesian methods has recently come to the forefront, and various strategies have been proposed to address it.
Arguably the most successful amongst these have been generalised Bayesian methods \citep[see e.g.][]{bissiri2016general, Jewson2018,knoblauch2019generalized}.
Building on these ideas, \citet{knoblauch2018doubly} introduced the first robust version of BOCD using generalised Bayesian inference based on $\beta$-divergences ($\beta$-BOCD). 

While the resulting algorithm is generally applicable and provides robustness, it has a major drawback that has severely impeded its broader use: it is not scalable. 
This is mainly due to the intractability of the generalised posterior and predictive distributions, which require multiple variational approximations to be performed at each time point.
As a result, $\beta$-BOCD is practically infeasible if one is interested in online methods for high-frequency data, or if one deals with a constrained computational budget.

\begin{figure}[t!]
\centering
\includegraphics[width=\columnwidth]{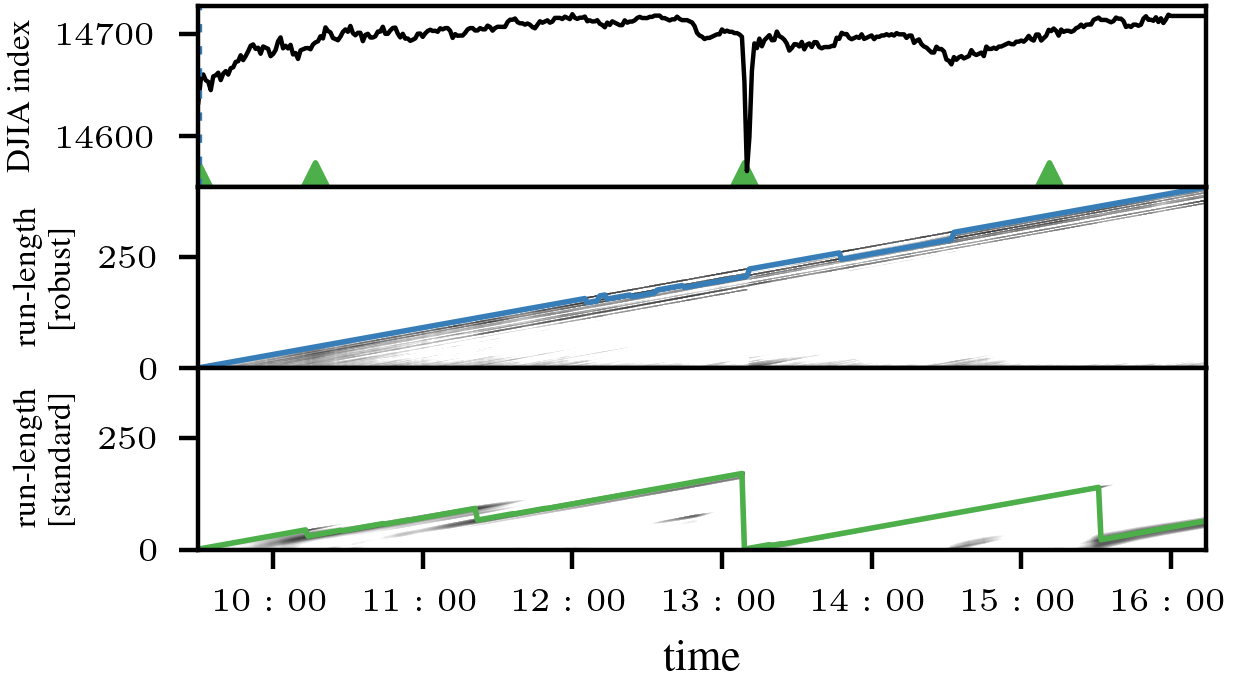}
\vspace{-0.8cm}
\caption{ \textit{Twitter Flash Crash.}
The run-length is the  time since the last changepoint (CP).
\textit{Top:} Jow Dones Index with Maximum a posteriori CPs detected by  standard BOCD marked as ${\color{green_plot}\blacktriangle}$. 
\textit{Middle \& Bottom:} run-length posteriors of $\DSM_m$-BOCD  with most likely run-length in {\textcolor{blue_plot}{\textbf{blue}}} and of standard BOCD in \textcolor{green_plot}{\textbf{green}}.
Standard BOCD incorrectly detects a CP, $\DSM_m$-BOCD does not.
}
\label{fig:flash}
\vspace{-0.7cm}
\end{figure}

This paper proposes a new generalised Bayesian inference scheme based on \emph{diffusion score matching} \citep{barp2019minimum}, which is effectively a  weighted version of the original score-matching divergence of \citet{Hyvarinen2006}.
If the weights are chosen appropriately, the resulting posteriors are provably robust, and the corresponding CP detection algorithm, denoted $\DSM_m$-BOCD, is also robust to outliers. This is illustrated in \Cref{fig:flash} on the value of the Dow Jones Industrial Average (DJIA) on the day of the `Twitter flash crash' on 17/04/2013: standard BOCD falsely identifies $3$ CPs, whilst $\DSM_m$-BOCD correctly identifies no CPs.

Additionally---and unlike posteriors based on the $\beta$-divergence---$\DSM_m$-posteriors also have a conjugacy property 
for likelihoods of the exponential family so long as the prior is chosen to be a normal, truncated normal, or any other squared exponential distribution.
This makes $\DSM_m$-BOCD very fast: specifically, it ensures that all posteriors used in the algorithm  can be updated exactly and efficiently through elementary vector and matrix calculations.
If one uses the pruning strategies for the CP posterior proposed in \citet{adams2007bayesian}, the computational complexity of our algorithm is $\mathcal{O}(T(d^2+p^2))$; where $T$ is the length of the data stream, $d$ is the dimension of the observations, and $p$ is the number of model parameters. This is the same computational complexity as the original BOCD algorithm. This also makes $\DSM_m$-BOCD more than $10$ times faster than $\beta$-BOCD in our numerical experiments.

Beyond that, $\DSM_m$-BOCD has benefits that make it more attractive than standard BOCD even from a purely computational point of view in certain settings.
For example, when modelling  $d$-dimensional observations with non-Gaussian exponential family distributions, we can obtain conjugate $\DSM_m$-posteriors,  even though no conjugate posteriors exist in the standard Bayesian case.

In summary, we make two key contributions:
\vspace*{-0.35cm}
\begin{itemize}
    \item[(1)] We derive and propose the $\DSM_m$-posterior;  \emph{proving its robustness and closed form updates} in the process; 
    \vspace*{-0.2cm}
    \item[(2)] We use this posterior for BOCD, leading to the first algorithm that is \emph{both robust and scalable}.\vspace*{-0.35cm}
\end{itemize}
The remainder of the paper is structured as follows: \cref{sec:background} reviews BOCD and generalised Bayesian inference. \cref{sec:methodology} derives the robustness and scalability properties of $\DSM_m$-posteriors, and integrates them with BOCD. We then validate our approach experimentally in \cref{sec:experiments}.


\section{Background}
\label{sec:background}

Our method merges generalised Bayesian posteriors based on diffusion score matching with the BOCD algorithm. 
Here, we provide a short explanation of the concepts relevant for understanding this interface.

\subsection{Bayesian Online Changepoint Detection (BOCD)}
Let $x_{1:T}$ be a sequence of observations $x_1, x_2, \dots, x_T$, where $x_t \in \X \subseteq \R^{d}$ for the time index $t \in \{1,\ldots,T\}$. 
Throughout, $x_{1:T}$ follows the product partition model of \citet{barry1993bayesian}: the data is partitioned through a sequence of changepoints (CPs) $0 = \tau_1 < \tau_2 < \dots$ so that the $i$-th segment is $x_{\tau_i:\tau_{i+1}-1}$, and data within the $i$-th segment is independently and identically distributed (i.i.d.) conditional on $\tau_i, \tau_{i+1}$. 
In the model underlying BOCD, the data in each segment is modelled with the same model class $\{p_{\theta}:\theta \in \Theta\}$, but with a different parameter for each segment.
The key insight for this model, reached independently by both \citet{adams2007bayesian} and \citet{fearnhead2007line}, is that Bayesian inference can be made online and efficiently if, at time $t$, one only tracks a posterior distribution over the most recent CP.
Instead of defining a prior and posterior over the CPs directly, BOCD therefore seeks to infer the so-called \textit{run-length} $r_t$ of the current segment---the amount of time since the most recent CP.

The remainder of this section details the hierarchical Bayesian model underlying the BOCD construction. Firstly, the approach uses a conditional prior on the run-length:
\begin{talign*}
    r_{t}|r_{t-1}&\sim H(r_{t}|r_{t-1}). && \text{(Conditional prior on run-length)} 
\end{talign*}
Since at time $t$ we either have a new CP ($r_t = 0$) or the current segment continues ($r_t = r_{t-1} +1$), $H(r_t|r_{t-1})$ has positive probability mass only for $r_t \in \{0, r_{t-1}+1\}$. See \citet{Wilson2010} for a broader discussion of prior selection.
Conditional on $r_t$, all data points $x_{t'}$ from the same segment $(t-r_t):t$ so that $t' \in \{t-r_t, t-r_t+1, \dots, t\}$ are then modelled as i.i.d. from $p_{\theta}$ via
\begin{talign*}
    \theta &\sim \pi(\theta)&& \text{(Parameter prior),}
    \\
    x_{t'}|\theta &\sim p_{\theta}(x_{t'}) && \text{(Probability model for data). }
\end{talign*}
The quantity of interest is the posterior over $r_t$, which is 
\begin{talign*}
    p(r_{t}|x_{1:t}) = \dfrac{p(r_{t},x_{1:t})}{p(x_{1:t})} = \dfrac{p(r_{t},x_{1:t})}{\sum_{r_{t}=0}^{t}p(r_{t},x_{1:t})}.
\end{talign*}
This shows that the run-length posterior is tractable whenever the joint distribution between run-length and  observations given by $p(r_{t},x_{1:t})$ is also tractable.
Intriguingly, these terms can be computed efficiently via an online recursion whenever the posterior predictive is tractable:
\begin{IEEEeqnarray}{rCl}
    p(r_{t},x_{1:t}) = \hspace*{-0.2cm} \sum_{r_{t-1}= 0}^{t-1}\hspace*{-0.25cm}\underbrace{p \left(x_{t}|x^{(r_t)}_{t-1}
    \right)}_{\text{  \:\:  Predictive Posterior}}
    \hspace*{-0.25cm}\underbrace{H(r_{t}|r_{t-1})}_{\text{CP prior}}p(r_{t-1},x_{1:t-1}),
    \nonumber
\end{IEEEeqnarray}
where $x^{(r_t)}_{t-1} = x_{t-r_t:t-1}$ is the segment with run-length $r_t$ except the most recent observation $x_t$, and the predictive of $x_t$ constructed from $x^{(r_t)}_{t-1}$ is
\begin{talign}
    p(x_{t}|x^{(r_t)}_{t-1})= \int_{\Theta}p_{\theta}(x_{t})\pi^{\operatorname{B}}(\theta|x^{(r_t)}_{t-1})
    d\theta,
    \label{eq:posterior-predictive}
\end{talign}
where $\pi^{\operatorname{B}}(\theta|x^{(r_t)}_{t-1}) \propto \prod_{i=1}^{r_t} p_\theta(x_{t-i}) \pi(\theta)$ is the Bayes posterior over $\theta$ in the current segment.
To ensure that this integral is tractable in closed form, BOCD algorithms usually use prior densities $\pi(\theta)$ and models $p_{\theta}(x)$ forming a conjugate likelihood-prior pair.

Since the standard BOCD method was proposed, it has been extended in a wide range of directions. A full literature review is beyond the scope of this paper, but we highlight extensions to Gaussian processes models \cite{saatcci2010gaussian}, non-exponential families \citep{turner2013online}, multiple models in different segments \citep{knoblauch2018spatio,knoblauch2019generalized}, observations with multiple fidelity levels \citep{Gundersen2021}, and prediction \citep{Agueldo2020}. We also note that while BOCD  only quantifies uncertainty about the most recent CP, an efficient maximum a-posteriori Viterbi-style recursion can be used to efficiently update point estimates of all CP locations \citep[see e.g.][]{fearnhead2007line}.

Unfortunately, BOCD is not robust: it finds spurious CPs whenever the model is a poor  description of data.
To address this issue, one can replace the standard Bayesian parameter posterior in \eqref{eq:posterior-predictive} with a robust generalised Bayesian posterior.

\subsection{Generalised Bayesian (GB) inference}

If the statistical model $p_{\theta}$ is well-specified so that for some $\theta_0 \in \Theta$,  the true data-generating mechanism is $p_{\theta_0}$, standard Bayesian updating is the optimal way of integrating prior information with data \citep{zellner1988optimal}.
Crucially, this no longer holds if the model is misspecified. In this setting, uncertainties are miscalibrated, posterior inferences are sensitive to outliers and heterogeneity, and the Bayesian update may no longer be the best way of processing information. To address these issues, a recent line of research has advocated for the use of generalised Bayesian inference \citep[see e.g.][]{grunwald2012safe, bissiri2016general, Jewson2018, knoblauch2019generalized, fong2021martingale, Jewson2021, matsubara2021robust} which, once conditioned on some data $x_{1:T}$, is based on a belief distribution of the form
\begin{talign}
    \pi_{\omega}^{\mathcal{D}}(\theta | x_{1:T})\propto \pi(\theta) \exp\{-\omega T \cdot \widehat{\mathcal{D}}(\theta)\}.
    \label{eq:gen-bayes}
\end{talign}
While $\widehat{\mathcal{D}}(\theta)$ could in principle represent any loss function, we consider a narrowed scope.
Specifically, for $\mathcal{D}$ being a discrepancy measure on the space of probability measures on $\mathcal{X}$,  and $p_0$ being the true data-generating process, $\widehat{\mathcal{D}}:\Theta\to\R$ uses $x_{1:T}$ to estimate the part of the discrepancy $\mathcal{D}(p_0, p_{\theta})$ that depends on $\theta$.
Here, $\omega>0$ is called the \emph{learning rate} and acts as a scaling parameter that determines how quickly the posterior learns from the data.
While the choice of $\omega$ may depend on various other considerations \citep{grunwald2012safe, holmes2017assigning}, it is typically chosen to provide approximate frequentist coverage \citep{Lyddon2019,Martin2022}.
Neither of these techniques are suitable for the online setting; and we will therefore propose a new way of choosing $\omega$ in \cref{sec:DSM-BOCD}.

The posteriors in \eqref{eq:gen-bayes} are called generalised posteriors because for $\omega =1$, and $\widehat{\mathcal{D}}(\theta) = \frac{1}{T}\sum_{t=1}^T -\log p(x_{t}|\theta)$ estimating the Kullback-Leibler divergence between the model and the data-generating process, one recovers the standard Bayes  posterior.
Using such generalisations is usually done for two main arguments: to provide robustness, and to improve computation. 
For example, 
\citet{Chernozhukov2003} are the first to suggest them for estimation when computing a minimum is hard.
Rather than focusing on computational aspects, \citet{Hooker2014}, \citet{Ghosh2016} and \citet{bissiri2016general} advocated for their use to improve robustness.
This has led to a flurry of papers proposing particular discrepancy measures that induce robustness \citep[e.g.][]{cherief2020mmd}, and their various applications in sequential Monte Carlo \citep{boustati2020generalised}, deep Gaussian processes \citep{knoblauch2019robust}, and Bayesian neural networks \citep{futami2018variational}.
More recently, a line of work has exploited generalised posteriors both for computational gain and robustness: \citet{matsubara2021robust,matsubara2022generalised} showcased their use for robust inference in unnormalised models with both continuous and discrete data. Similarly, 
\citet{schmon2020generalized, Dellaporta2022, Pacchiardi2021, Legramanti2022} have used them for robustness in simulation-based and likelihood-free settings.

\subsection{Generalised Bayesian Inference in BOCD}

\citet{knoblauch2018doubly} first proposed a robustification of BOCD based on \eqref{eq:gen-bayes} and the $\beta$-divergence, 
which is robust and  well-defined for  $\beta \in (0,\infty)$ when $p_{\theta}$ is uniformly bounded on $\mathcal{X}$, and whose natural estimator was derived by \citet{basu1998robust} and is given by 
\begin{talign*}
    \widehat{\mathcal{D}}_{\beta}(\theta) = \dfrac{1}{T}\sum_{t=1}^{T}\dfrac{1}{1+\beta}\int_{\X}p_{\theta}(x)^{1+\beta}dx+\dfrac{1}{\beta}p_{\theta}(x_{t})^{\beta}.
    \nonumber
\end{talign*}
While the resulting method can be made robust, it has several key failures that make it computationally infeasible in most settings. Firstly, the loss depends on  $\int_{\X}p_{\theta}(x)^{1+\beta}dx$. Unless this integral is available in closed form, using $\widehat{\mathcal{D}}_{\beta}$ will introduce the same challenges as working with an intractable likelihood in a standard Bayesian setting.
Secondly, the hyperparameter $\beta$ enters the loss as the  exponent of a likelihood. Numerically, this makes the loss extremely sensitive to even very minor changes in $\beta$, which makes it very difficult to tune $\beta$ and counteracts the very robustness one hopes to achieve.
This numerical instability is compounded by the fact that \eqref{eq:gen-bayes} depends on the exponentiation of $\widehat{\mathcal{D}}_{\beta}$---if $p_{\theta}$ is an exponential family member, then even if one ignores the integral term, $\exp\{ -\omega T \widehat{\mathcal{D}}_{\beta}(\theta) \}$ is a double exponential. 
Thirdly, posteriors based on $\widehat{\mathcal{D}}_{\beta}$ often have to be approximated using variational methods.
Since this has to be done for all run-lengths $r_t$ at each time step $t$  for the recursive relationship powering the algorithm, this represents a substantive computational overhead.

Taken together, these issues often render posteriors based on the $\beta$-divergence computationally infeasible; especially in high-dimensional settings. 
In principle, one could replace the $\beta$-divergence with various robust alternatives whose numerical issues are less substantive and whose hyperparameters are easier to tune---such as $\alpha$-divergences \citep{Hooker2014}, $\gamma$-divergences \citep{knoblauch2019robust}, or maximum mean discrepancies \citep{cherief2020mmd}.
Unfortunately, none of these alternatives alleviate the problem of computationally expensive variational approximations.
This is an issue, since ultimately, it is the conjugate forms that can be updated in terms of sufficient statistics that make BOCD computationally attractive.

In the face of this, it may be tempting to postulate an inherent trade-off between robustness and computational tractability for generalised Bayes.
But this is not so; recently, it was shown that robust posteriors based on kernel Stein discrepancies have a conjugacy property \citep[Proposition 2 of][]{matsubara2021robust}.
These generalised posteriors however are not suitable for BOCD: Updating them from $t-1$ to $t$ observations takes $\mathcal{O}(t)$ operations---as opposed to the $\mathcal{O}(1)$ operations required for standard Bayesian posteriors. 
Such updates would lead to an  algorithm whose computational demands per iteration increase linearly the longer it is run, leading to an `online' algorithm in name only. 
This is why the current paper proposes a new class of generalised posteriors based on diffusion score matching \citep{barp2019minimum}: we prove that they are robust, and lead to conjugacy, with closed forms updates that take $\mathcal{O}(1)$ operations.


\section{Methodology}
\label{sec:methodology}

We present the methodological innovations of the current paper in three steps: After an exposition of diffusion score matching, we first explain how the resulting generalised Bayesian posterior yields closed form updates.
In a second step, we provide formal robustness guarantees for these posteriors.
In the last step, we show how to integrate them into the BOCD framework, yielding $\DSM_{m}$-BOCD; and how to choose its hyperparameters.

\subsection{Diffusion Score Matching Bayes}
\label{sec:DSM-Bayes}

\subparagraph{Notation.}
We write the divergence operator on a vector field $f$ as $\nabla \cdot f$. 
This condenses the formulae derived in this paper, but we provide all uncondensed versions in  \cref{appendix:background}.
The $d$-dimensional vector (and $d\times p$ sized matrix) of partial derivatives for $f:\mathcal{X} \to \mathbb{R}$ (and $g:\mathcal{X} \to \mathbb{R}^p$) evaluated at $x \in \mathcal{X}$ is written as $\nabla f(x)$ (and $\nabla g(x)$).

\vspace{-2mm}

\subparagraph{Score Matching.} 
Score matching is a discrepancy-based method for estimating parameters first proposed by \citet{Hyvarinen2006}.
The key idea is to approximately minimise the Fisher divergence between the statistical model $\{p_{\theta}:\theta \in \Theta\}$ and the data-generating process $p_0$.
This method takes its name from the fact that for a density $p$ on $\mathcal{X}$ and $s_{p}(x) = \nabla \log p(x)$---the so-called \textit{score function} of the density $p$---the Fisher divergence is 
\begin{talign*}
    \DSM_{I_d}(p_0||p_\theta) &= \E_{X\sim p_0 } \left[\|s_{p_{\theta}}(X) - s_{p_0}(X)\|_{2}^{2}\right].
\end{talign*}
This divergence is therefore minimised by matching the scores of the model to that of the data-generating process $p_0$.
This objective is convenient for two main reasons: Firstly, for the  density $p = \tilde{p}\frac{1}{Z}$ with normaliser $Z>0$, $s_p = s_{\tilde{p}}$, so that the objective is attractive when working with likelihoods whose normaliser $Z$ is unknown.
Secondly, the objective can be rewritten so that the scores of $p_0$ do not have to be estimated to compute it.

Score matching has been used widely, including for data on manifolds or other complex domains \citep{mardia2016score,liu2022estimating,Scealy2022}, energy-based models \citep{Vincent2011}, anomaly detection \citep{Zhai2016}, nonparametric density estimation \citep{Sriperumbudur2017}, score-based generative modelling \citep{Song2019}, and even for Bayesian model selection \citep{Dawid2015,Shao2019,Jewson2021} or as a scoring rule \citep{Parry2012}.
In recent work, \citet{wu2023quickest} used score matching for change point detection.
This work differs from ours in three major ways: they consider a frequentist setting based on the CUSUM statistic, they only consider standard score matching, and they are not concerned with robustness.
Building on these successes, various generalised forms of score matching have been proposed over the years to address some of its shortcomings \citep[e.g.][]{Lyu2009,xu2022generalized,Yu2022, matsubara2022generalised}.

\vspace{-2mm}

\paragraph{Diffusion Score Matching.} The particular generalisation we consider hereafter is \emph{diffusion score matching}, which was introduced in \citet{barp2019minimum} and amounts to a weighted version of the Fisher divergence given as
\begin{talign*}
    \DSM_{m}(p_0\|p_{\theta}) = \E_{X\sim p_0 } \left[\|m^{\top}(X)(s_{p_{\theta}}(X) - s_{p_0}(X))\|_{2}^{2}\right],
\end{talign*}
for a pointwise invertible matrix-valued function $m:\mathcal{X} \to \mathbb{R}^{d\times d}$. The function $m$ is also known as diffusion matrix due to the construction of this distance as a Stein discrepancy with a pre-conditioned diffusion Stein operator; see \citet{Anastasiou2021} for full details.

Like $\DSM_{I_d}$, $\DSM_m$ is a statistical divergence between densities $p_{0}$ and $p_{\theta}$ on $\X = \R^d$ whenever $\int_{\X}| s_{p_{\theta}}(x) - s_{p_0}(x)|^2p_0(x)dx < \infty$. 
%
Under appropriate smoothness and boundary conditions, this can be extended to the case where $\mathcal{X}$  is a connected subset of $\mathbb{R}^d$ \citep{liu2022estimating, Zhang2022}.
More generally, $\DSM_m$ recovers $\DSM_{I_d}$ for $m(x) = I_d$ (the $d-$dimensional identity matrix), the estimator in \citet{Hyvarinen2007} for $m(x) = x$, and the generalised h-score matching method for $m(x) = \operatorname{diag}(h^{1/2}(x))$, where $h$ is defined in \citet{Yu2018, Yu2019}.
The function $m$ can be thought of as up-weighting areas of $\mathcal{X}$ on which matching the scores of the model to that of the data-generating process is most important.
For the purposes of the current paper, we will choose this weight to ensure that the constructed generalised posteriors are provably robust (see \cref{sec:robustness} for details).

Estimating $\DSM_m$ directly is challenging, as it would require estimating the unknown score $s_{p_0}$.
Fortunately, under the aforementioned smoothness and boundary conditions (\cref{appendix:boundary}) \citep{liu2022estimating}, we  can expand the above equation and use integration by parts.
Then, up to a constant that does not depend on $\theta$, we can rewrite $\DSM_{m}(p_0\|p_{\theta})$ as
\begin{IEEEeqnarray}{rCl}
    \E_{X\sim p_0 } [\|(m^{\top}s_{p_{\theta}})(X)\|_{2}^{2} 
    +(2\nabla\cdot(mm^{\top} s_{p_{\theta}}))(X) ]. \quad  
    \label{eq:DSM-expansion}
\end{IEEEeqnarray}
Crucially, the quantity above no longer features $s_{p_0}$, and only depends on $p_0$ through an expectation.
This leads to a natural estimator which for $x_{1:T}$ is given by 
\begin{talign*}
   \widehat{\mathcal{D}}_m(\theta) & =  \dfrac{1}{T}\sum_{t=1}^{T}d_m(\theta, x_t), \quad \text{ where } \nonumber \\
    d_m(\theta, x_t) & =  \|(m^{\top}s_{p_{\theta}})(x_{t})\|_{2}^{2}  +(2\nabla\cdot(mm^{\top} s_{p_{\theta}}))(x_{t}).
\end{talign*}

\vspace{-2mm}

\paragraph{Diffusion Score Matching Bayes.} Based on the estimator $\widehat{\mathcal{D}}_m$ for the part of $\mathcal{D}_m$ that depends on $\theta$, we can construct 
\begin{IEEEeqnarray}{rCl}
    \pi^{\DSM_m}_{\omega}(\theta| x_{1:T}) \propto \pi(\theta) \exp(-\omega T \widehat{\mathcal{D}}_{m}(\theta)).  
    \label{eq:DSM-bayes}
\end{IEEEeqnarray}
Using score matching for a generalised Bayes posterior was first discussed in passing in Section 4.2 of \citet{Giummole2019}, though the context is about reference priors for objective Bayesian inference, and the method is only briefly mentioned. 
This previous work also does not robustify the resulting posterior through the introduction of a weighting matrix $m$, or derive its conjugate posteriors.

\subsection{Conjugacy for Exponential Family Models}
\label{sec:conjugacy}
The conjugacy of posteriors of the form \eqref{eq:DSM-bayes} make them more attractive than potential alternatives. For exponential family likelihoods, these posteriors  depend on two parameters available in closed form.
The exponential family is given the collection of models with a probability density function 
\begin{IEEEeqnarray}{rCl}
    p_{\theta}(x) = \exp{(\eta(\theta)^{\top}r(x)-a(\theta)+b(x))},
    \label{eq:exponential-family}
\end{IEEEeqnarray}
where $\eta:\Theta\to\R^{p}$, $r:\X\to\R^{p}$,  $a:\Theta\to\R$, and $b:\X\to\R$. When $\eta(\theta) = \theta$, we say that the exponential family model is in natural form, and one can reparametrise a model to natural form by reparameterising with the map $\eta^{-1}$.
Exponential family class of distributions includes the Gaussian, exponential, Gamma, and Beta distributions.
\begin{proposition}
\label{DSM-exponential}
If $p_{\theta}$ is given by \eqref{eq:exponential-family}, then
\begin{IEEEeqnarray*}{rCl}
    \pi^{\DSM_m}_{\omega}(\theta| x_{1:T}) \propto \pi(\theta) \exp(-\omega T [\eta(\theta)^{\top} \Lambda_{T}\eta (\theta)+\eta (\theta)^{\top}\nu_{T}]),
\end{IEEEeqnarray*}
for $\Lambda_{T} = \frac{1}{T}\sum_{t=1}^{T}\Lambda(x_{t})$, $\nu_{T} = \frac{2}{T}\sum_{t=1}^{T} \nu(x_{t})$, and 
\begin{talign*}
    \Lambda(x) &= (\nabla r^{\top}mm^{\top} \nabla r)(x),\\
    \nu(x) &= \left( \nabla r^{\top}mm^{\top} \nabla b + \nabla\cdot(mm^{\top}\nabla r)\right)(x).
\end{talign*}
Taking  $\eta(\theta)=\theta$ and choosing a squared exponential prior $\pi(\theta) \propto\exp{(-\frac{1}{2} (\theta-\mu)^{\top}\Sigma^{-1}(\theta-\mu))}$, also makes $\pi^{\DSM_m}_{\omega}(\theta| x_{1:T})$ a (truncated) normal of the form
\begin{talign*}
   \pi^{\DSM_m}_{\omega}(\theta| x_{1:T}) &\propto\exp{\left(-\frac{1}{2} (\theta-\mu_{T})^{\top}\Sigma_{T}^{-1}(\theta-\mu_{T})\right)},
\end{talign*}
    for $\Sigma_{T}^{-1} = \Sigma^{-1}+2\omega T \Lambda_{T}$ and
    $\mu_{T} = \Sigma_{T} \left(\Sigma^{-1}\mu-\omega T \nu_{T}\right)$.
\end{proposition}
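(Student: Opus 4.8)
The plan is to start from the definition of the $\DSM_m$-posterior in \eqref{eq:DSM-bayes} and simply substitute the exponential family form \eqref{eq:exponential-family} into the estimator $\widehat{\mathcal{D}}_m(\theta)$. The first step is to compute the score function of $p_\theta$ from \eqref{eq:exponential-family}: since $\log p_\theta(x) = \eta(\theta)^\top r(x) - a(\theta) + b(x)$ and $a(\theta)$ does not depend on $x$, we get $s_{p_\theta}(x) = \nabla_x \log p_\theta(x) = (\nabla r(x))\eta(\theta) + \nabla b(x)$, where $\nabla r(x) \in \R^{d\times p}$ is the Jacobian of $r$. The key observation is that $s_{p_\theta}(x)$ is \emph{affine} in $\eta(\theta)$, which is exactly what will produce a quadratic form in the exponent.

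Next I would plug this into the two terms of $d_m(\theta, x_t)$. For the first term, $\|(m^\top s_{p_\theta})(x)\|_2^2 = s_{p_\theta}(x)^\top m(x) m^\top(x) s_{p_\theta}(x)$; expanding $s_{p_\theta} = \nabla r \cdot \eta(\theta) + \nabla b$ gives a quadratic piece $\eta(\theta)^\top (\nabla r^\top mm^\top \nabla r)(x)\, \eta(\theta)$, a linear piece $2\eta(\theta)^\top (\nabla r^\top mm^\top \nabla b)(x)$, and a $\theta$-independent constant that can be dropped. For the second term, $\nabla\cdot(mm^\top \nabla s_{p_\theta})(x)$: here $\nabla s_{p_\theta}(x)$ is the $d\times d$ Jacobian of $s_{p_\theta}$ in $x$, which by linearity decomposes into a part coming from $\nabla(\nabla r \cdot \eta(\theta))$ — linear in $\eta(\theta)$ — and a part from $\nabla(\nabla b)$, which is $\theta$-independent and again discarded. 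Collecting the $\eta(\theta)$-linear contribution $2\nabla\cdot(mm^\top \nabla r)(x)\,\eta(\theta)$ (using linearity of the divergence operator to pull $\eta(\theta)$ out, since it is a constant vector) and combining with the linear piece from the first term yields precisely $\nu(x)$. Summing over $t$ and dividing by $T$ gives $\widehat{\mathcal{D}}_m(\theta) = \eta(\theta)^\top \Lambda_T \eta(\theta) + \eta(\theta)^\top \nu_T$ up to an additive constant, which establishes the first display.

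For the second part, I would set $\eta(\theta) = \theta$ and multiply the squared-exponential prior $\pi(\theta) \propto \exp(-\tfrac12(\theta-\mu)^\top\Sigma^{-1}(\theta-\mu))$ by $\exp(-\omega T[\theta^\top\Lambda_T\theta + \theta^\top\nu_T])$. The exponent is then $-\tfrac12 \theta^\top(\Sigma^{-1} + 2\omega T\Lambda_T)\theta + \theta^\top(\Sigma^{-1}\mu - \omega T\nu_T) + \text{const}$, and a standard completion-of-the-square argument identifies $\Sigma_T^{-1} = \Sigma^{-1} + 2\omega T\Lambda_T$ and $\mu_T = \Sigma_T(\Sigma^{-1}\mu - \omega T\nu_T)$, with the parenthetical ``truncated'' covering the case where the prior support is a strict subset of $\R^p$ (the same quadratic form, restricted).

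The main obstacle — really the only non-routine part — is the careful handling of the divergence/Jacobian term $\nabla\cdot(mm^\top\nabla s_{p_\theta})$: one must be precise about the matrix dimensions (the $d\times p$ Jacobian $\nabla r$, the $d\times d$ matrices $mm^\top$ and $\nabla s_{p_\theta}$), about the fact that the divergence operator acts columnwise on the matrix $mm^\top\nabla s_{p_\theta}$ and that $\eta(\theta)=\theta$ being a constant in $x$ lets it be factored out of both $\nabla$ and $\nabla\cdot$, and about verifying that the only term involving $\theta$ is linear so that no extra quadratic contribution is missed. This is where the condensed divergence notation could hide an error, and I would cross-check against the uncondensed formulae in \cref{appendix:background}. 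Everything else — computing $s_{p_\theta}$, expanding the square, discarding $\theta$-free constants, completing the square — is mechanical.
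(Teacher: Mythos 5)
Your proposal is correct and follows essentially the same route as the paper's proof: substitute the affine score $s_{p_\theta}(x)=\nabla r(x)\,\eta(\theta)+\nabla b(x)$ into both terms of $d_m$, observe that the squared-norm term is quadratic and the divergence term linear in $\eta(\theta)$, discard $\theta$-independent constants, and complete the square for $\eta(\theta)=\theta$ with the squared exponential prior. The one wobble is your literal reading of the second term as $\nabla\cdot(mm^\top\nabla s_{p_\theta})$ with $\nabla s_{p_\theta}$ the $d\times d$ Hessian (whose columnwise divergence would be a $d$-vector, not a scalar); as the uncondensed formulae in \cref{appendix:background} confirm, the term is really the scalar $\nabla\cdot(mm^\top s_{p_\theta})$, and the linear contribution $\eta(\theta)^\top\bigl(\nabla\cdot(mm^\top\nabla r)(x)\bigr)$ you collect is exactly what that yields, so your final expressions for $\Lambda_T$, $\nu_T$, $\Sigma_T^{-1}$ and $\mu_T$ agree with the paper's.
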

The proof is in \cref{proof:DSM-exponential}. The natural exponential family allows us to recover a form of Gaussian conjugacy, since the diffusion score matching squared becomes a quadratic form in this case. This renders DSM-Bayes scalable; as we will elaborate upon in \cref{sec:DSM-BOCD}, $\Sigma_{T}^{-1}$ and $ \mu_{T}$ can be updated with a new observation in $\O(p^2+d^2)$ operations.

\subsection{Global Bias-Robustness}
\label{sec:robustness}
\begin{figure}[t]
    \centering
\includegraphics{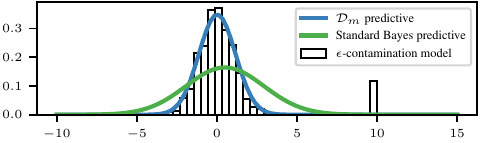}
    \vspace*{-0.4cm}
    \caption{
    \textit{Impact of misspecification in posteriors.}
    The robust \textcolor{blue_plot}{\textbf{$\DSM_m$-posterior}} and non-robust \textcolor{green_plot}{\textbf{standard Bayes}} posterior predictive when the data are incorrectly modelled as Gaussian, but  follow an $\varepsilon$-contamination model $\P = 0.95\mathcal{N}(0,1)+0.05\delta_{10}$.
    }
    \label{fig:contamination}
    \vspace*{-0.4cm}
\end{figure}

Building a BOCD algorithm based on $\pi^{\DSM_m}_{\omega}(\theta| x_{1:T})$ is attractive not only computationally, but also due to its robustness.
We prove this robustness formally by using the classical framework of $\varepsilon$-contamination models \citep[see, e.g.][]{huber2011robust}.
Given a distribution $\P$, we consider its $\varepsilon$-contaminated counterpart $\P_{\varepsilon,y} = (1-\varepsilon)\P+\varepsilon\delta_{y}$, where  $\delta_y$ is the dirac-measure at some $y\in\X$, and $\varepsilon\in[0,1]$. 
The classical perspective on robustness  proceeds by defining a point estimator $E:\mathcal{P}(\mathcal{X}) \to \Theta$ that maps from $\mathcal{P}(\mathcal{X})$, the space of distributions on $\mathcal{X}$, to $\Theta$. 
One then investigates its robustness via $\lim_{\varepsilon\to 0}\frac{1}{\varepsilon}\|E(\P)-E(\P_{\varepsilon, y})\|_2$, which under mild conditions is equivalent to the derivative  $\frac{\partial}{\partial\varepsilon}\|E(\P_{\varepsilon, y})\|_2\big|_{\varepsilon=0}$.
This limit is the so-called \textit{influence function}. It quantifies the impact of an infinitesimal contamination at $y$ on the estimator, and is a classical tool to measure outlier robustness.

The Bayesian case is slightly more complicated and depicted in \cref{fig:contamination}: we are not concerned by estimators on $\Theta$, but on $\mathcal{P}(\Theta)$.
The estimates under study are thus  infinite-dimensional objects that vary over $\Theta$.
To get a handle on this, we first define an influence function \textit{pointwise} for each $\theta \in \Theta$.
To this end, note that $\widehat{\mathcal{D}}_{m}(\theta) = \mathbb{E}_{X \sim \P_T}[d_m(\theta, X)]$. 
We can now define the density-valued estimator $\pi^{\DSM_m}_{\omega}(\theta|\P) \propto \pi(\theta) \exp\{ - \omega T \mathbb{E}_{X \sim \P}[d_m(\theta, X)] \}$, noting 
$\pi^{\DSM_m}_{\omega}(\theta|\P_T) = \pi^{\DSM_m}_{\omega}(\theta|x_{1:T})$ for $\P_T = \frac{1}{T}\sum_{t=1}^T\delta_{x_t}$.
Its pointwise posterior influence function (PIF) is 
\begin{IEEEeqnarray}{rCl}
    \text{PIF}(y,\theta,\P) = \dfrac{d}{d\varepsilon}\pi^{\DSM_m}_{\omega}(\theta|\P_{\varepsilon, y})\big|_{\varepsilon=0}.
    \nonumber
\end{IEEEeqnarray}
Since this is a definition of sensitivity that is local to both $\theta$ and $y$, making a global statement for all of $\pi^{\DSM_m}_{\omega}(\theta|x_{1:T})$ requires that we aggregate a notion of sensitivity over both arguments.
The easiest way to do this is to investigate $\sup_{\theta \in \Theta, y \in \mathcal{X}}\text{PIF}(y,\theta,\P_{T})$.
If this double supremum is bounded, we call a posterior \textit{globally bias-robust}, which means that the impact of contamination on the posterior density is uniformly bounded---both over the parameter space, and the location of said contamination in the data space.
This way of studying the robustness of generalised posteriors was pioneered in \citet{Ghosh2016}, and extended by \citet{matsubara2021robust}.
We build on these advances, and provide a simple condition on $m$ for global bias-robustness of $\pi^{\DSM_m}_{\omega}(\theta|x_{1:T})$ in some exponential family models.
\begin{proposition}
\label{Robust m}
If $p_{\theta}$ is as in \eqref{eq:exponential-family} so that $ \eta(\theta) = \theta $ and $\nabla b = 0$, and if the prior is a squared exponential as in \Cref{DSM-exponential}, then $\pi^{\DSM_m}_{\omega}(\theta|x_{1:T})$ is globally bias-robust if  $m:\X\to\R^{d\times d}$ is chosen so  that $\theta^{\star}\neq 0_p$  and
\begin{talign*}
 m_{ij}(x) = \left\{
 \begin{aligned}
     &\dfrac{1}{\sqrt{1+(\nabla r(x) \theta^{\star})_{i}^{2}}} && \text{if }i=j, \\
     &0 && \text{if }i\neq j.
 \end{aligned}
\right.
\end{talign*}
\end{proposition}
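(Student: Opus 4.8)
The plan is to compute the posterior influence function explicitly using the Gaussian conjugate form from \Cref{DSM-exponential}, and then show that the chosen $m$ forces the relevant $x$-dependent quantities to be uniformly bounded. Since $\eta(\theta)=\theta$ and $\nabla b = 0$, the posterior is a (truncated) normal with natural parameters determined by $\Lambda_T = \frac{1}{T}\sum_t \Lambda(x_t)$ and $\nu_T = \frac{2}{T}\sum_t \nu(x_t)$, where $\Lambda(x) = (\nabla r^\top m m^\top \nabla r)(x)$ and, because $\nabla b = 0$, $\nu(x) = (\nabla\cdot(mm^\top \nabla r))(x)$. The first step is to write $\pi^{\DSM_m}_{\omega}(\theta|\P_{\varepsilon,y}) \propto \pi(\theta)\exp\{-\omega T\, \E_{X\sim\P_{\varepsilon,y}}[d_m(\theta,X)]\}$, expand $\E_{X\sim\P_{\varepsilon,y}} = (1-\varepsilon)\E_{X\sim\P} + \varepsilon d_m(\theta,y)$, and differentiate at $\varepsilon = 0$. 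This yields
\begin{talign*}
 \text{PIF}(y,\theta,\P) = \omega T \left( \E_{X\sim\P}[d_m(\theta,X)] - d_m(\theta,y) - c(\P) \right)\pi^{\DSM_m}_{\omega}(\theta|\P),
\end{talign*}
where $c(\P)$ is the $\theta$-independent normalising correction ensuring the derivative integrates to zero (it comes from differentiating the normalising constant). The key observation is that $d_m(\theta,y) = \theta^\top \Lambda(y)\theta + 2\theta^\top\nu(y)$ is quadratic in $\theta$, and the Gaussian density $\pi^{\DSM_m}_{\omega}(\theta|\P)$ decays like $\exp(-\tfrac12\theta^\top\Sigma_T^{-1}\theta)$ superexponentially in $\|\theta\|$, so the product $d_m(\theta,y)\cdot \pi^{\DSM_m}_\omega(\theta|\P)$ is bounded in $\theta$ \emph{for each fixed $y$}; the work is to make this bound uniform in $y$ as well.

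The heart of the argument is therefore to bound $\sup_{\theta,y}\big| d_m(\theta,y)\,\pi^{\DSM_m}_\omega(\theta|\P)\big|$ and similarly for the $\E_{X\sim\P}[d_m(\theta,X)]$ and $c(\P)$ terms (the latter two are constants in $y$, so they only need boundedness in $\theta$, which follows from Gaussian decay against a quadratic). For the $y$-dependent term, I would substitute the prescribed diagonal $m$: then $(mm^\top)(y)$ is diagonal with entries $1/(1+(\nabla r(y)\theta^\star)_i^2)$, so
\begin{talign*}
 \theta^\top \Lambda(y)\theta = \sum_{i=1}^d \frac{(\nabla r(y)^\top \theta)_i^2}{1 + (\nabla r(y)\theta^\star)_i^2}.
\end{talign*}
The crucial point is that as $\|\nabla r(y)\|$ grows (which is how an outlier $y$ destabilises the posterior), the denominators grow quadratically with it, provided $\theta^\star \neq 0_p$ ensures $(\nabla r(y)\theta^\star)_i$ genuinely scales with $\nabla r(y)$ in the relevant directions; a Cauchy--Schwarz-type estimate then shows each summand is $O(\|\theta\|^2/\|\theta^\star\|^2)$ or otherwise controlled, i.e. bounded uniformly in $y$ up to a factor polynomial in $\theta$. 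The divergence term $\nu(y) = \nabla\cdot(mm^\top\nabla r)(y)$ requires more care because it involves derivatives of $m$; differentiating the entries of $m$ produces terms with $(\nabla r\theta^\star)_i$ and $\partial(\nabla r \theta^\star)_i$ over $(1+(\nabla r\theta^\star)_i^2)^{3/2}$, which again decay or stay bounded as $\|\nabla r(y)\|\to\infty$. Combining these, $|d_m(\theta,y)| \le P(\|\theta\|)$ for a fixed polynomial $P$ independent of $y$, and multiplying by the Gaussian tail gives a finite supremum over $\theta$ and $y$ jointly.

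I expect the main obstacle to be the divergence/boundary term $\nu(x) = \nabla\cdot(mm^\top\nabla r)(x)$: unlike $\Lambda$, it is not a clean quadratic form, it mixes second derivatives of $r$ with first derivatives of the rational function defining $m$, and one must verify that \emph{every} resulting term stays bounded (or grows slower than the Gaussian decays) uniformly in $y$ — including checking that no direction in which $\nabla r(y)$ can blow up is left uncontrolled by the condition $\theta^\star\neq 0_p$. A secondary technical point is justifying the interchange of differentiation (in $\varepsilon$) and the limit/integration defining the normalising constant, and confirming the truncated-normal case introduces no extra boundary contributions; these follow from dominated convergence using the same Gaussian-times-polynomial bounds, but should be stated. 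Once the uniform bound on $|d_m(\theta,y)|$ against the Gaussian is in hand, global bias-robustness is immediate from the displayed PIF formula.
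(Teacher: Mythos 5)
Your proposal follows essentially the same route as the paper: the core of both arguments is the uniform-in-$y$ bound $|d_m(\theta,y)|\le\gamma(\theta)$ for a polynomial $\gamma$, obtained by splitting $d_m$ into the quadratic term $\theta^{\top}\Lambda(y)\theta$ (controlled because the denominators $1+(\nabla r(y)\theta^{\star})_i^2$ absorb the growth of $\nabla r(y)$) and the divergence term (requiring the extra boundedness check you flag), followed by the observation that a polynomial times a squared-exponential prior is bounded and integrable over $\Theta$. The only cosmetic difference is that you differentiate the normalised posterior directly to get the PIF, whereas the paper reduces to the two integrability conditions of \citet{matsubara2021robust} via an intermediate lemma for averaged losses; note in passing that your normalising correction $c(\P)$ does depend on $y$ through $\mathbb{E}_{\theta'\sim\pi^{\DSM_m}_{\omega}(\cdot|\P)}[d_m(\theta',y)]$, but the same uniform polynomial bound controls it.
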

While $m$ could in principle depend on $\theta$, this would break the conjugacy presented in \Cref{DSM-exponential}.
The above choice of $m$ does \textit{not} depend on $\theta$, and therefore maintains the computational advantages of $\pi^{\DSM_m}_{\omega}(\theta|x_{1:T})$.
The result's conditions are also mild:
we can always ensure that $\eta(\theta) = \theta $ by re-parameterising.
Similarly, most distributions of interest satisfy $\nabla b = 0$. Examples include Gaussians, exponentials, (inverse) Gamma, and Beta distributions.
Note also that $m$ is only applicable to models with support $\mathcal{X}=\R^{d}$, as the expansion in \eqref{eq:DSM-expansion} is otherwise not valid without additional boundary conditions.
However, we prove that the proposed weight matrix $m$ 
 also leads to a well-defined discrepancy measure for various distributions defined on subsets of $\mathcal{X}$, including the Gamma and the exponential distribution (see \cref{appendix:boundary}).

\subsection{$\DSM_m$-BOCD}
\label{sec:DSM-BOCD}

Using our robust posterior within BOCD is straightforward, as its only appearance is in the  posterior predictive via
\begin{talign*}
    p\big(x_{t}|x^{(r)}_{t-1}\big) = \int_{\Theta}p_{\theta}(x_{t})\pi^{\DSM_m}_{\omega}\big(\theta|x^{(r)}_{t-1}\big)d\theta.
    \nonumber
\end{talign*}
If $p_{\theta}$ is a natural exponential family with a squared exponential prior, then $\pi^{\DSM_m}_{\omega}(\theta| x^{(r)}_{t-1}))$ is a normal distribution parameterised by inverse covariance matrix $\Sigma_{t-1, r}^{-1}$ and mean $\mu_{t-1, r}$ by virtue of \ref{DSM-exponential}.
This makes the predictive  easy to compute---either in closed form or by sampling from $\pi^{\DSM_m}_{\omega}$---which is a significant advantage over the $\beta$-BOCD framework. For the latter, the posterior will generally be intractable so that the algorithm relies on variational approximations.
Importantly, there is no way to both efficiently and exactly update variational approximations based on $x_{1:t}$ once observation $x_{t+1}$ arrives: one either uses cheap updates that lead to subpar variational approximations of the posterior, or one re-computes the approximation from scratch  at the expense of a substantive computational overhead.

In contrast, our approach allows for a cheap and exact update: if we store
$\Sigma_{t-1, r}^{-1}$ and $\mu_{t-1, r}$, we can perform the update $\pi^{\DSM_m}_{\omega}(\theta|x^{(r)}_{t-1}) \mapsto \pi^{\DSM_m}_{\omega}(\theta|x^{(r+1)}_{t})$ that adds $x_t$ into the parameter posterior of the segment $x^{(r)}_{t-1}$ via
\begin{talign*}
    \Sigma_{t, r+1}^{-1} &= \Sigma_{t-1, r}^{-1}+2\omega \Lambda(x_{t}),
    \nonumber \\
    \mu_{t, r+1} &=   \Sigma_{t, r+1} \left(\Sigma_{t-1, r}^{-1}\mu_{t-1, r} -2\omega\nu(x_{t})\right).
\end{talign*}
If we have access to the un-inverted matrix $\Sigma_{t, r+1}$, all of these operations are basic matrix and vector additions or multiplications that take $\O(p^2+d^2)$ operations to execute.
While naively computing $\Sigma_{t, r+1}$ from $\Sigma_{t, r+1}^{-1}$ would take $\O(p^3)$ operations, we can apply the Sherman-Morrison formula to the update of $\Sigma_{t, r+1}^{-1}$ to reduce this to $\O(p^2)$, maintaining the overall complexity of $\O(p^2+d^2)$. 
This is also the complexity of standard BOCD with the Gaussian likelihood and conjugate prior \citep{adams2007bayesian}.
In CP methods for high-frequency data, both the number of parameters $p$ and the data dimension $d$ are typically small, so that an update of $\O(p^2+d^2)$ is attractive.

\vspace{-2mm}

\paragraph{Run-length pruning.}
A naive implementation of $\DSM_m$-BOCD would keep a posterior over all possible run-lengths $r_t = \{0,1\dots, t-1\}$, but this would lead to an algorithm with overall complexity $\O(\sum_{t=1}^T t(d^2+p^2) ) = \O(T^2(d^2+p^2))$ for a time series of length $T$.
To prevent this, authors have proposed to `prune' the run-length posterior to a constant length \citep{adams2007bayesian, fearnhead2007line}.
Here, we follow the most popular strategy \citep[e.g.][]{adams2007bayesian, saatcci2010gaussian, knoblauch2018spatio} by keeping only the $k$ most probable run-lengths. 
For all experiments, we take $k=50$.

\begin{figure*}[t]
    \centering    \includegraphics[trim= {0cm 0.4cm 0cm 0cm}, clip]{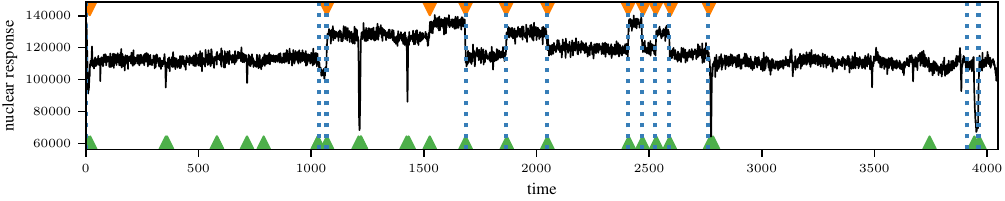}
    \vspace*{-0.4cm}
    \caption{ \textit{Well-log data.} MAP segmentation  indicated by {\textcolor{blue_plot}{\textbf{blue}}} dashed lines for $\DSM_m$-BOCD, ${\color{orange_plot}\blacktriangledown}$ for $\beta$-BOCD, and  ${\color{green_plot}\blacktriangle}$ for standard BOCD.
    Standard BOCD mistakenly labels outliers as CPs, while both $\DSM_m$-BOCD and $\beta$-BOCD are robust and identify lasting changes.
    }
    \label{fig:well}
    \vspace*{-0.4cm}
\end{figure*}

\vspace{-2mm}

\paragraph{Choice of $m$} Throughout, we choose $m$ as per \Cref{Robust m}, as it ensures robustness---even for certain distributions with boundaries (see \cref{appendix:boundary}).
Regarding $\theta^{\star}$, we found that $\DSM_m$-BOCD was not very sensitive to this choice; likely because tuning $\omega$ offsets any sensitivity to it. 
In all experiments, we thus picked $\theta^{\star}$ as the maximum likelihood estimate computed on the full data set. We note that one known issue with robust CP detection method is that they can experience a latency when it comes to detecting actual CPs. Interestingly, this is not something we observe in our experiments with this choice of $m$ and $\theta^*$.

\vspace{-2mm}

\paragraph{Choice of $\omega$}
How to choose $\omega$ is an important question for generalised Bayesian inference, and has more than one answer \citep{Lyddon2019,Syring2019,matsubara2022generalised,Bochkina2022,Wu2023}. 
Previous methods are computationally expensive, asymptotically motivated, and focus on tuning the learning rate to provide asymptotically correct frequentist coverage.
As the computational overhead of these methods is substantial and their asymptotic arguments generally do not apply to the CP setting, we pursue a different  strategy: 
we match the uncertainty of the generalised posterior to that of its standard counterpart on the first $t^{\star}$ observations of the data stream.
To operationalise this, we choose
\begin{talign*}
    \omega^{\star} = \argmin_{\omega>0} \operatorname{KL} \left(
        \pi^{\DSM_m}_{\omega}
        (\theta|x_{1:t^{\star}}) 
        \| \pi^{\operatorname{B}}
        (\theta|x_{1:t^{\star}})
    \right). 
\end{talign*}
Computing $\omega^{\star}$ is implemented using automatic differentiation via \texttt{jax} \citep{jax2018github}. This is possible  even if the standard Bayes posterior $\pi^{\operatorname{B}}$  is intractable, since $\pi^{\DSM_m}_{\omega}$ has a conjugacy property (see \Cref{DSM-exponential}).
Since the standard Bayes posterior is reliable in the absence of outliers and heterogeneity, this yields reasonable uncertainty quantification if the degree of misspecification is mild at the beginning of the data stream. 
Our experiments confirm this: the uncertainty is well-calibrated, both predictively and with regards to the run-length posterior.

 \begin{figure}[t!]
     \centering
     \includegraphics[trim= {0cm 0.4cm 0cm 0cm}, clip]{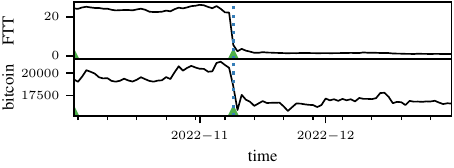}
     \vspace*{-0.4cm}
      \caption{
      \textit{Crypto-crash.}
      MAP segmentation  indicated by {\textcolor{blue_plot}{\textbf{blue}}} dashed lines for $\DSM_m$-BOCD, and  by ${\color{green_plot}\blacktriangle}$ for standard BOCD.
      There are no outliers, so both methods  identify the correct CP.
      }
     \label{fig:tfx}
     \vspace*{-0.4cm}
 \end{figure}


\section{Experiments}
\label{sec:experiments}

We investigate $\DSM_m$-BOCD empirically in several numerical experiments. 
In doing so, we highlight its computational and inferential advantages over standard BOCD and $\beta$-BOCD.
In all experiments, we choose conjugate priors as in \cref{DSM-exponential}, and $m$ and $\omega$ as in \cref{sec:DSM-BOCD}. All code and data is publicly available at \url{https://github.com/maltamiranomontero/DSM-bocd}.

\vspace{-3mm}

\paragraph{Computational complexity.}

We compare the complexity of the three BOCD methods in different settings and show that $\DSM_m$-BOCD is considerably faster than $\beta$-BOCD, even when sampling is needed. 
Moreover, \cref{fig:complexity_mean_T} shows that $\DSM_m$ is as fast as standard BOCD when $d=1$ and the predictive posterior is available in closed form. See \cref{app:additional-computation-experiemnt} for details. 

\begin{figure}[ht]
  \centering
  \includegraphics[width=\linewidth]{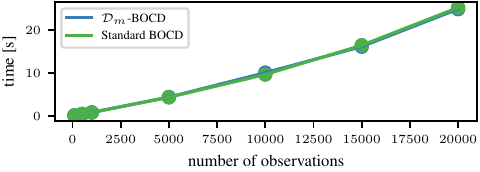}
  \vspace{-8mm}
  \caption{Overall time in seconds versus number of observations. We observe that both methods are equally fast for any number of observations.}
  \label{fig:complexity_mean_T}
\end{figure}

\vspace{-3mm}

\paragraph{Accuracy and detection delay.}

We quantify the method's performance advantage by comparing detection delay and accuracy on artificially generated data with outliers. We generate 600 samples with 2\% of outliers and 6 CPs; then, we report the positive predictive value (PPV), true positive rate (TPR), and detection delay. See \cref{app:accuracy} for the exact expression of the metrics.
\cref{tab:accuracy} shows that our method detects the same amount of true positives as the standard BOCD while not detecting many false positives, showing the strength of our method. Moreover, the detection delay shows that in spite of being robust to outliers, $\DSM_m$-BOCD does not cause any delay in the detection of CP.

\begin{table}[h!]
\centering
\resizebox{\columnwidth}{!}{
\begin{tabular}{@{}llll@{}}
\toprule
Method        & PPV            & TPR            & Delays        \\ \midrule
$\DSM_m$-BOCD      & \textbf{0.907$\pm$0.154} & \textbf{0.883$\pm$0.13} & 1.643$\pm$0.475         \\
Standard BOCD & 0.6$\pm$0.128            & 0.833$\pm$0.149          & \textbf{1.05$\pm$1.545} \\ \bottomrule
\end{tabular}}
\caption{Performance indices-mean and standard deviation-using the positive predictive value (PPV), true positive rate (TPR), and the detection delays over 10 realisations. For PPV and TPR, the nearest to 1, the better. For delays, the lower, the better.}
\label{tab:accuracy}
\end{table}

\begin{figure*}[ht]
    \centering
    \includegraphics[trim= {0cm 0.35cm 0cm 0cm}, clip, width=\textwidth]{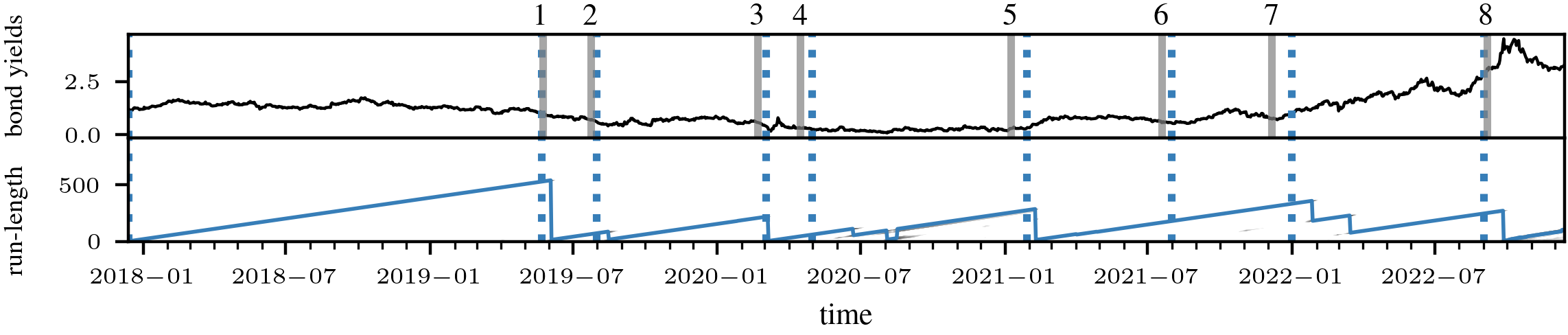}
    \vspace*{-0.8cm}
    \caption{
    \textit{UK's 10 year government bond yield 2018-2023.}
    The MAP-segmentation resulting from $\DSM_m$-BOCD is indicated in dashed {\textcolor{blue_plot}{\textbf{blue}}} lines.
    The bottom panel displays the corresponding run-length posterior, with the most likely run-length marked in {\textcolor{blue_plot}{\textbf{blue}}}.
    A series of political events of national importance closely track the segmentation, and are marked with solid
    \textcolor{gray}{\textbf{gray}} lines:
    1. Theresa May announces her resignation from her position as prime minister;
    2. Boris Johnson sworn in as prime minister;
    3. the first Covid case recorded in EU;
    4. the first Covid wave in the UK is officially declared;
    5. the third Covid wave in the UK is officially declared;
    6. the legal limits on social contact removed in UK;
    7. Covid 'Plan B' measures are implemented in UK in response to the spread of the Omicron variant;
    8. Liz Truss is sworn in as prime minister.}
    \vspace*{-0.4cm}
    \label{fig:bond}
\end{figure*}

\vspace{-3mm}

\paragraph{Twitter flash crash \& Cryptocrash.}

A robust CP detection algorithm must not to be fooled by outliers while detecting CP correctly. 
We show that $\DSM_m$-BOCD has this capability on two real-world examples: the first is the Dow Jones Industrial Average (DJIA) index every minute on 17/04/2013, the day of the \textit{Twitter flash crash}. The data is publicly available on FirstRate Data.\footnote{\href{https://firstratedata.com/free-intraday-data}{https://firstratedata.com/free-intraday-data}}
That day, the Associated Press' Twitter account was hacked and falsely tweeted that explosions at the White House had injured then-president Barack Obama.
In response, the DJIA dropped by 150 points in a matter of seconds before bouncing back. 
As \cref{fig:flash} shows, this is a clear outlier. Modelling the time series with a Gaussian, the plot shows that $\DSM_m$-BOCD successfully ignores this blip, while standard BOCD incorrectly labels it as a CP. 
The second example tracks the average daily value of FTT and Bitcoin between 10/2022 and 12/2022, data which is publicly available on Yahoo finance.\footnote{\href{https://finance.yahoo.com/}{https://finance.yahoo.com/}} FTT was the token issued by FTX, one of the biggest crypto-exchanges before it failed due to a liquidity crisis on November 11th 2022. 
The ensuing collapse of FTX marked a crash in the value of various crypto-currencies, including Bitcoin.
Using a two-dimensional Gaussian distribution for both $\DSM_m$-BOCD and standard BOCD, \cref{fig:tfx} shows  that both methods correctly detect the CP. 
\cref{fig:tfx_full} in \cref{appendix:expDetails} also displays the run-length posteriors, and shows that robustness does not lead to increased CP detection latency.
\vspace{-3mm}

\paragraph{Well-log.}
The well-log data was introduced in \citet{ruanaidh1996numerical}, and consists in 4,050 nuclear magnetic resonance measurements recorded while drilling a well. CPs in the sequence correspond to changes in the sediment layers the drill is penetrating.
On top of these clear changes, the data contains outliers and contaminants corresponding to more short-term events in geological history---such as flooding, earthquakes, or volcanic activity.
When this data set is studied, its outliers have traditionally
been removed before CP detection algorithms are run \citep[see e.g.][]{adams2007bayesian, BSCPD2, LassoCP}.
We leave them in, and \cref{fig:well} shows that this is unproblematic for $\DSM_m$-BOCD, but does lead to falsely labelled CPs with BOCD.
We also compare the algorithm with  $\beta$-BOCD   \citep{knoblauch2018doubly}, and find that the detected changes are almost identical.
On a machine with processor Intel i7-7500U 2.7 GHz, and 12GB of RAM, $\DSM_m$-BOCD took about 10 times less than $\beta$-BOCD.

\vspace{-3mm}

\paragraph{Multivariate synthetic data.}

In certain settings, $\DSM_m$-posteriors are conjugate when standard posteriors are not.
An example is a multivariate time series whose dimensions follow different distributions belonging to the exponential family.
To this end, we generate 1000 samples from a time series with CPs at $t=250, 750$.
Conditional on the CPs, the data is generated independently from an exponential in the first dimension and Gaussian distribution in the second dimension.
$\DSM_m$-BOCD is immediately applicable, and \cref{fig:synthetic} shows that the algorithm functions reliably.
We do not compare to BOCD in this setting: for this model, standard Bayesian posteriors would require expensive sampling algorithms or variational approximations to be employed, rendering the algorithm impractical.

\begin{figure}[t]
    \centering
    \includegraphics{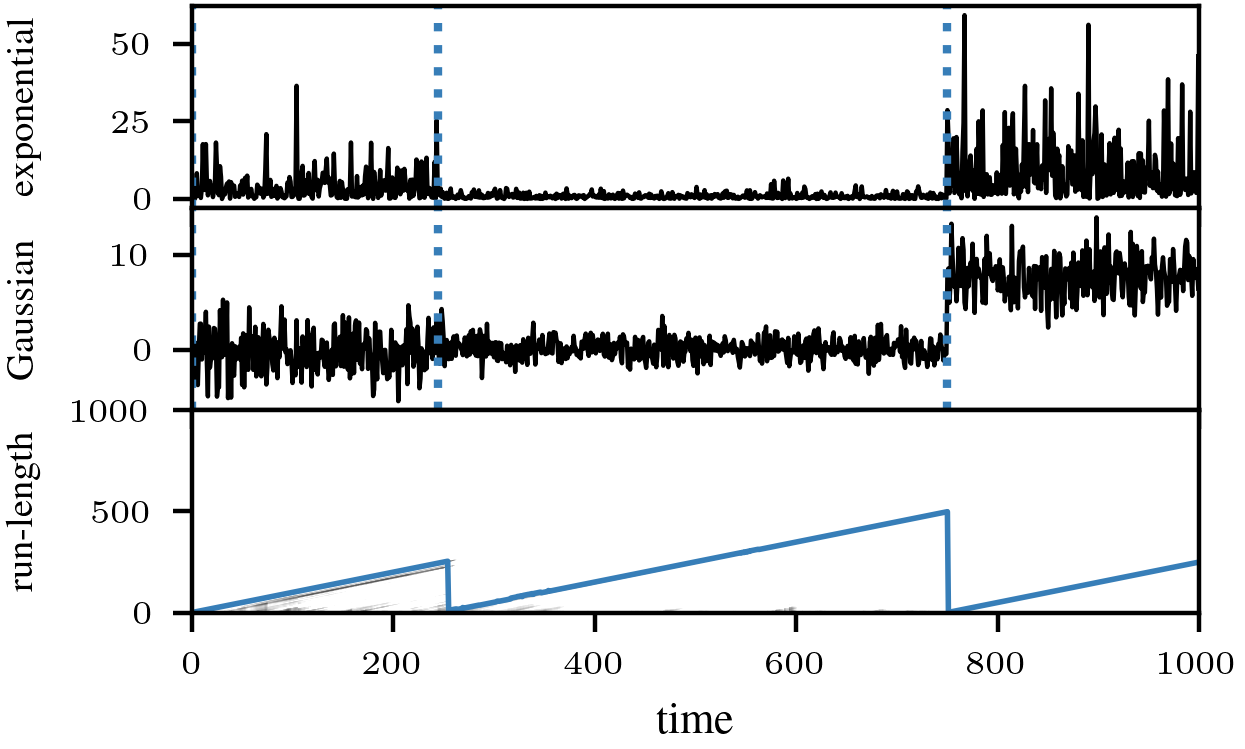}
    \vspace{-0.4cm}
    \caption{
    \textit{Multivariate synthetic example.}
    A 2-dimensional CP problem.
    For the chosen model, $\DSM_m$-BOCD is computationally efficient, but standard BOCD is computationally infeasible.
    The MAP segmentation is indicated by dashed {\textcolor{blue_plot}{\textbf{blue}}} lines, and the bottom panel shows the run-length distribution, with the most likely value in {\textcolor{blue_plot}{\textbf{blue}}}.}
    \vspace{-0.4cm} \label{fig:synthetic}
\end{figure}

\vspace{-3mm}

\paragraph{UK 10 year government bond yield.}
Finally, we run the $\DSM_m$-BOCD on the daily yield of 10 year UK government bonds from 2018 to 2022 (see \cref{fig:bond}). The data is publicly available via the Bank of England database.\footnote{\href{https://www.bankofengland.co.uk/boeapps/database/}{https://www.bankofengland.co.uk/boeapps/database/}}
Since the 10-year yield has been positive throughout history, we model it using the gamma distribution. 
As shown in \cref{fig:bond}, we detect changes in the yield curve that correspond to important political events in the UK.
This distribution leads to a  $\DSM_m$-posterior that is a  Gaussian truncated at zero. 
For standard Bayes, a conjugate prior exists, but it leads to a posterior with intractable normalisation constant. 
Like the multivariate synthetic data example, this constitutes another instance where $\DSM_m$-posteriors have better computational properties than standard Bayes.

\section{Conclusion}

We proposed $\DSM_m$-BOCD, a new version of BOCD that is both \emph{robust to outliers and scalable}. 
The algorithm relies on a new generalised Bayesian inference scheme constructed with  diffusion score-matching.
These posteriors have closed form updates for models that are members of the exponential family, and provide robustness by appropriately tuning the diffusion matrix $m$.
For $T$ observations, $d$-dimensional data, and $p$ model parameters, the overall run time of the method is $\O(T(p^2+d^2))$, and we demonstrate that it is just as fast as standard BOCD.
By showcasing the various computational and inferential benefits of $\DSM_m$-BOCD on a range of examples, we demonstrate that it is a powerful and needed addition to the literature. 
In the future, we will also investigate the applicability of $\DSM_m$-BOCD to regression models.
This is not trivial: the regression setting changes both the definition of valid score matching losses, as well as how to show their robustness  \citep{xu2022generalized}.

$\DSM_m$-posteriors also are of independent interest for computational challenges in Bayesian inference: 
like the generalised posterior in \citet{matsubara2021robust} and \citet{matsubara2022generalised}, they can be computed even without access to the normalising constant of the likelihood.
This suggests that $\DSM_m$-posteriors should be studied more broadly as a potential competitor to other Bayesian methods for intractable likelihood problems.

\subsection*{Acknowledgements}
We would like to thank Ayush Bharti for feedback on a first draft of this paper.
JK was funded by  EPSRC grant EP/W005859/1. FXB was supported by the Lloyd’s Register Foundation Programme on Data-Centric Engineering and The Alan Turing Institute under  EPSRC grant EP/N510129/1.

\bibliography{Bibliography}

\begin{thebibliography}{75}
\providecommand{\natexlab}[1]{#1}
\providecommand{\url}[1]{\texttt{#1}}
\expandafter\ifx\csname urlstyle\endcsname\relax
  \providecommand{\doi}[1]{doi: #1}\else
  \providecommand{\doi}{doi: \begingroup \urlstyle{rm}\Url}\fi

\bibitem[Adams \& MacKay(2007)Adams and MacKay]{adams2007bayesian}
Adams, R.~P. and MacKay, D.~J.
\newblock {B}ayesian online changepoint detection.
\newblock \emph{arXiv:0710.3742}, 2007.

\bibitem[Agudelo-Espa\~{n}a et~al.(2020)Agudelo-Espa\~{n}a, Gomez-Gonzalez,
  Bauer, Sch\"{o}lkopf, and Peters]{Agueldo2020}
Agudelo-Espa\~{n}a, D., Gomez-Gonzalez, S., Bauer, S., Sch\"{o}lkopf, B., and
  Peters, J.
\newblock Bayesian online prediction of change points.
\newblock In \emph{Conference on Uncertainty in Artificial Intelligence}, pp.\
  320--329, 2020.

\bibitem[Anastasiou et~al.(2023)Anastasiou, Barp, Briol, Ebner, Gaunt,
  Ghaderinezhad, Gorham, Gretton, Ley, Liu, Mackey, Oates, Reinert, and
  Swan]{Anastasiou2021}
Anastasiou, A., Barp, A., Briol, F.-X., Ebner, B., Gaunt, R.~E., Ghaderinezhad,
  F., Gorham, J., Gretton, A., Ley, C., Liu, Q., Mackey, L., Oates, C.~J.,
  Reinert, G., and Swan, Y.
\newblock {Stein's method meets computational statistics: A review of some
  recent developments}.
\newblock \emph{Statistical Science}, 38:\penalty0 120--139, 2023.

\bibitem[Barp et~al.(2019)Barp, Briol, Duncan, Girolami, and
  Mackey]{barp2019minimum}
Barp, A., Briol, F.-X., Duncan, A., Girolami, M., and Mackey, L.
\newblock Minimum {S}tein discrepancy estimators.
\newblock In \emph{Advances in Neural Information Processing Systems}, pp.\
  12964--12976, 2019.

\bibitem[Barry \& Hartigan(1993)Barry and Hartigan]{barry1993bayesian}
Barry, D. and Hartigan, J.~A.
\newblock A {B}ayesian analysis for change point problems.
\newblock \emph{Journal of the American Statistical Association}, 88\penalty0
  (421):\penalty0 309--319, 1993.

\bibitem[Basu et~al.(1998)Basu, Harris, Hjort, and Jones]{basu1998robust}
Basu, A., Harris, I.~R., Hjort, N.~L., and Jones, M.
\newblock Robust and efficient estimation by minimising a density power
  divergence.
\newblock \emph{Biometrika}, 85\penalty0 (3):\penalty0 549--559, 1998.

\bibitem[Bissiri et~al.(2016)Bissiri, Holmes, and Walker]{bissiri2016general}
Bissiri, P.~G., Holmes, C.~C., and Walker, S.~G.
\newblock A general framework for updating belief distributions.
\newblock \emph{Journal of the Royal Statistical Society: Series B (Statistical
  Methodology)}, 78\penalty0 (5):\penalty0 1103--1130, 2016.

\bibitem[Bochkina(2022)]{Bochkina2022}
Bochkina, N.
\newblock {Bernstein-von Mises theorem and misspecified models: a review}.
\newblock \emph{arXiv:2204.13614}, 2022.

\bibitem[Boustati et~al.(2020)Boustati, Akyildiz, Damoulas, and
  Johansen]{boustati2020generalised}
Boustati, A., Akyildiz, O.~D., Damoulas, T., and Johansen, A.
\newblock {Generalised Bayesian filtering via sequential Monte Carlo}.
\newblock In \emph{Advances in Neural Information Processing Systems}, pp.\
  418--429, 2020.

\bibitem[Bradbury et~al.(2018)Bradbury, Frostig, Hawkins, Johnson, Leary,
  Maclaurin, Necula, Paszke, Vander{P}las, Wanderman-{M}ilne, and
  Zhang]{jax2018github}
Bradbury, J., Frostig, R., Hawkins, P., Johnson, M.~J., Leary, C., Maclaurin,
  D., Necula, G., Paszke, A., Vander{P}las, J., Wanderman-{M}ilne, S., and
  Zhang, Q.
\newblock {JAX}: composable transformations of {P}ython+{N}um{P}y programs,
  2018.
\newblock URL \url{http://github.com/google/jax}.

\bibitem[Ch{\'e}rief-Abdellatif \& Alquier(2020)Ch{\'e}rief-Abdellatif and
  Alquier]{cherief2020mmd}
Ch{\'e}rief-Abdellatif, B.-E. and Alquier, P.
\newblock {MMD-Bayes: Robust Bayesian estimation via maximum mean discrepancy}.
\newblock In \emph{Symposium on Advances in Approximate Bayesian Inference},
  2020.

\bibitem[Chernozhukov \& Hong(2003)Chernozhukov and Hong]{Chernozhukov2003}
Chernozhukov, V. and Hong, H.
\newblock {An MCMC approach to classical estimation}.
\newblock \emph{Journal of Econometrics}, 115\penalty0 (2):\penalty0 293--346,
  2003.

\bibitem[Chib(1998)]{chib1998estimation}
Chib, S.
\newblock Estimation and comparison of multiple change-point models.
\newblock \emph{Journal of Econometrics}, 86\penalty0 (2):\penalty0 221--241,
  1998.

\bibitem[Dawid \& Musio(2015)Dawid and Musio]{Dawid2015}
Dawid, A.~P. and Musio, M.
\newblock {Bayesian model selection based on proper scoring rules}.
\newblock \emph{Bayesian Analysis}, 10\penalty0 (2):\penalty0 479--499, 2015.

\bibitem[Dellaporta et~al.(2022)Dellaporta, Knoblauch, Damoulas, and
  Briol]{Dellaporta2022}
Dellaporta, C., Knoblauch, J., Damoulas, T., and Briol, F.-X.
\newblock {Robust Bayesian inference for simulator-based models via the MMD
  posterior bootstrap}.
\newblock In \emph{International Conference on Artificial Intelligence and
  Statistics}, pp.\  943--970, 2022.

\bibitem[Fearnhead(2006)]{fearnhead2006exact}
Fearnhead, P.
\newblock {Exact and efficient Bayesian inference for multiple changepoint
  problems}.
\newblock \emph{Statistics and Computing}, 16\penalty0 (2):\penalty0 203--213,
  2006.

\bibitem[Fearnhead \& Liu(2007)Fearnhead and Liu]{fearnhead2007line}
Fearnhead, P. and Liu, Z.
\newblock On-line inference for multiple changepoint problems.
\newblock \emph{Journal of the Royal Statistical Society: Series B (Statistical
  Methodology)}, 69\penalty0 (4):\penalty0 589--605, 2007.

\bibitem[Fong et~al.(2021)Fong, Holmes, and Walker]{fong2021martingale}
Fong, E., Holmes, C., and Walker, S.~G.
\newblock Martingale posterior distributions.
\newblock \emph{arXiv:2103.15671}, 2021.

\bibitem[Futami et~al.(2018)Futami, Sato, and Sugiyama]{futami2018variational}
Futami, F., Sato, I., and Sugiyama, M.
\newblock Variational inference based on robust divergences.
\newblock In \emph{International Conference on Artificial Intelligence and
  Statistics}, pp.\  813--822, 2018.

\bibitem[Ghosh \& Basu(2016)Ghosh and Basu]{Ghosh2016}
Ghosh, A. and Basu, A.
\newblock {Robust Bayes estimation using the density power divergence}.
\newblock \emph{Annals of the Institute of Statistical Mathematics},
  68\penalty0 (2):\penalty0 413--437, 2016.

\bibitem[Giummol{\`{e}} et~al.(2019)Giummol{\`{e}}, Mameli, Ruli, and
  Ventura]{Giummole2019}
Giummol{\`{e}}, F., Mameli, V., Ruli, E., and Ventura, L.
\newblock {Objective Bayesian inference with proper scoring rules}.
\newblock \emph{TEST}, 28\penalty0 (3):\penalty0 728--755, 2019.

\bibitem[Gr{\"u}nwald(2012)]{grunwald2012safe}
Gr{\"u}nwald, P.
\newblock {The safe Bayesian}.
\newblock In \emph{International Conference on Algorithmic Learning Theory},
  pp.\  169--183, 2012.

\bibitem[Gundersen et~al.(2021)Gundersen, Cai, Zhou, Engelhardt, and
  Adams]{Gundersen2021}
Gundersen, G.~W., Cai, D., Zhou, C., Engelhardt, B.~E., and Adams, R.~P.
\newblock {Active multi-fidelity Bayesian online changepoint detection}.
\newblock In \emph{Uncertainty in Artificial Intelligence}, pp.\  1916--1926,
  2021.

\bibitem[Hallgren et~al.(2022)Hallgren, Heard, and
  Adams]{hallgren2022changepoint}
Hallgren, K.~L., Heard, N.~A., and Adams, N.~M.
\newblock Changepoint detection in non-exchangeable data.
\newblock \emph{Statistics and Computing}, 32\penalty0 (6):\penalty0 1--19,
  2022.

\bibitem[Holmes \& Walker(2017)Holmes and Walker]{holmes2017assigning}
Holmes, C.~C. and Walker, S.~G.
\newblock {Assigning a value to a power likelihood in a general Bayesian
  model}.
\newblock \emph{Biometrika}, 104\penalty0 (2):\penalty0 497--503, 2017.

\bibitem[Hooker \& Vidyashankar(2014)Hooker and Vidyashankar]{Hooker2014}
Hooker, G. and Vidyashankar, A.
\newblock {Bayesian model robustness via disparities}.
\newblock \emph{TEST}, 23\penalty0 (3):\penalty0 556--584, 2014.

\bibitem[Huber(1981)]{huber2011robust}
Huber, P.~J.
\newblock Robust statistics.
\newblock \emph{Wiley Series in Probability and Mathematical Statistics}, 1981.

\bibitem[Hyv{\"{a}}rinen(2006)]{Hyvarinen2006}
Hyv{\"{a}}rinen, A.
\newblock {Estimation of non-normalized statistical models by score matching}.
\newblock \emph{Journal of Machine Learning Research}, 6:\penalty0 695--708,
  2006.

\bibitem[Hyv{\"{a}}rinen(2007)]{Hyvarinen2007}
Hyv{\"{a}}rinen, A.
\newblock {Some extensions of score matching}.
\newblock \emph{Computational Statistics and Data Analysis}, 51\penalty0
  (5):\penalty0 2499--2512, 2007.

\bibitem[Itoh \& Kurths(2010)Itoh and Kurths]{itoh2010change}
Itoh, N. and Kurths, J.
\newblock Change-point detection of climate time series by nonparametric
  method.
\newblock In \emph{Proceedings of the World Congress on Engineering and
  Computer Science}, volume~1, pp.\  445--448, 2010.

\bibitem[Jewson \& Rossell(2022)Jewson and Rossell]{Jewson2021}
Jewson, J. and Rossell, D.
\newblock {General Bayesian loss function selection and the use of improper
  models}.
\newblock \emph{Journal of the Royal Statistical Society Series B: Statistical
  Methodology}, 84\penalty0 (5):\penalty0 1640--1665, 2022.

\bibitem[Jewson et~al.(2018)Jewson, Smith, and Holmes]{Jewson2018}
Jewson, J., Smith, J.~Q., and Holmes, C.
\newblock {Principled Bayesian minimum divergence inference}.
\newblock \emph{Entropy}, 20\penalty0 (6):\penalty0 442, 2018.

\bibitem[Kawahara \& Sugiyama(2009)Kawahara and Sugiyama]{kawahara2009change}
Kawahara, Y. and Sugiyama, M.
\newblock Change-point detection in time-series data by direct density-ratio
  estimation.
\newblock In \emph{Proceedings of the 2009 SIAM International Conference on
  Data Mining}, pp.\  389--400. SIAM, 2009.

\bibitem[Kim et~al.(2022)Kim, Park, Lee, and Song]{kim2022unsupervised}
Kim, K., Park, J.~H., Lee, M., and Song, J.~W.
\newblock {Unsupervised change point detection and trend prediction for
  financial time-series using a new CUSUM-based approach}.
\newblock \emph{IEEE Access}, 10:\penalty0 34690--34705, 2022.

\bibitem[Knoblauch(2019)]{knoblauch2019robust}
Knoblauch, J.
\newblock {Robust deep Gaussian processes}.
\newblock \emph{arXiv:1904.02303}, 2019.

\bibitem[Knoblauch \& Damoulas(2018)Knoblauch and
  Damoulas]{knoblauch2018spatio}
Knoblauch, J. and Damoulas, T.
\newblock {Spatio-temporal Bayesian on-line changepoint detection with model
  selection}.
\newblock In \emph{International Conference on Machine Learning}, pp.\
  2718--2727, 2018.

\bibitem[Knoblauch et~al.(2018)Knoblauch, Jewson, and
  Damoulas]{knoblauch2018doubly}
Knoblauch, J., Jewson, J.~E., and Damoulas, T.
\newblock Doubly robust {B}ayesian inference for non-stationary streaming data
  with $\beta$-divergences.
\newblock In \emph{Advances in Neural Information Processing Systems}, 2018.

\bibitem[Knoblauch et~al.(2022)Knoblauch, Jewson, and
  Damoulas]{knoblauch2019generalized}
Knoblauch, J., Jewson, J., and Damoulas, T.
\newblock An optimization-centric view on {B}ayes' rule: Reviewing and
  generalizing variational inference.
\newblock \emph{Journal of Machine Learning Research}, 23\penalty0
  (132):\penalty0 1--109, 2022.

\bibitem[Kummerfeld \& Danks(2013)Kummerfeld and Danks]{kummerfeld2013tracking}
Kummerfeld, E. and Danks, D.
\newblock Tracking time-varying graphical structure.
\newblock \emph{Advances in Neural Information Processing Systems}, 26, 2013.

\bibitem[Legramanti et~al.(2022)Legramanti, Durante, and
  Alquier]{Legramanti2022}
Legramanti, S., Durante, D., and Alquier, P.
\newblock {Concentration and robustness of discrepancy-based ABC via Rademacher
  complexity}.
\newblock \emph{arXiv:2206.06991}, 2022.

\bibitem[Levy-leduc \& Harchaoui(2008)Levy-leduc and Harchaoui]{LassoCP}
Levy-leduc, C. and Harchaoui, Z.
\newblock Catching change-points with lasso.
\newblock In \emph{Advances in Neural Information Processing Systems}, pp.\
  617--624, 2008.

\bibitem[Liu et~al.(2022)Liu, Kanamori, and Williams]{liu2022estimating}
Liu, S., Kanamori, T., and Williams, D.~J.
\newblock Estimating density models with truncation boundaries using score
  matching.
\newblock \emph{Journal of Machine Learning Research}, 23\penalty0
  (186):\penalty0 1--38, 2022.

\bibitem[Lyddon et~al.(2019)Lyddon, Holmes, and Walker]{Lyddon2019}
Lyddon, S.~P., Holmes, C.~C., and Walker, S.~G.
\newblock {General Bayesian updating and the loss-likelihood bootstrap}.
\newblock \emph{Biometrika}, 106\penalty0 (2):\penalty0 465--478, 2019.

\bibitem[Lyu(2009)]{Lyu2009}
Lyu, S.
\newblock {Interpretation and generalization of score matching}.
\newblock In \emph{Uncertainty in Artificial Intelligence}, pp.\  359--366,
  2009.

\bibitem[Mardia et~al.(2016)Mardia, Kent, and Laha]{mardia2016score}
Mardia, K.~V., Kent, J.~T., and Laha, A.~K.
\newblock Score matching estimators for directional distributions.
\newblock \emph{arXiv:1604.08470}, 2016.

\bibitem[Martin \& Syring(2022)Martin and Syring]{Martin2022}
Martin, R. and Syring, N.
\newblock {Direct Gibbs posterior inference on risk minimizers: Construction,
  concentration, and calibration}.
\newblock \emph{Handbook of Statistics}, 2022.

\bibitem[Matsubara et~al.(2022{\natexlab{a}})Matsubara, Knoblauch, Briol,
  Oates, et~al.]{matsubara2022generalised}
Matsubara, T., Knoblauch, J., Briol, F.-X., Oates, C., et~al.
\newblock Generalised {B}ayesian inference for discrete intractable likelihood.
\newblock \emph{arXiv:2206.08420}, 2022{\natexlab{a}}.

\bibitem[Matsubara et~al.(2022{\natexlab{b}})Matsubara, Knoblauch, Briol, and
  Oates]{matsubara2021robust}
Matsubara, T., Knoblauch, J., Briol, F.-X., and Oates, C.~J.
\newblock Robust generalised {B}ayesian inference for intractable likelihoods.
\newblock \emph{Journal of the Royal Statistical Society: Series B (Statistical
  Methodology)}, 84\penalty0 (3):\penalty0 997--1022, 2022{\natexlab{b}}.

\bibitem[Pacchiardi \& Dutta(2021)Pacchiardi and Dutta]{Pacchiardi2021}
Pacchiardi, L. and Dutta, R.
\newblock {Generalized Bayesian likelihood-free inference using scoring rules
  estimators}.
\newblock \emph{arXiv:2104.03889}, 2021.

\bibitem[Page(1954)]{page1954continuous}
Page, E.~S.
\newblock Continuous inspection schemes.
\newblock \emph{Biometrika}, 41\penalty0 (1/2):\penalty0 100--115, 1954.

\bibitem[Parry et~al.(2012)Parry, Dawid, and Lauritzen]{Parry2012}
Parry, M., Dawid, A.~P., and Lauritzen, S.
\newblock {Proper local scoring rules}.
\newblock \emph{Annals of Statistics}, 40\penalty0 (1):\penalty0 561--592,
  2012.

\bibitem[Reeves et~al.(2007)Reeves, Chen, Wang, Lund, and Lu]{reeves2007review}
Reeves, J., Chen, J., Wang, X.~L., Lund, R., and Lu, Q.~Q.
\newblock A review and comparison of changepoint detection techniques for
  climate data.
\newblock \emph{Journal of Applied Meteorology and Climatology}, 46\penalty0
  (6):\penalty0 900--915, 2007.

\bibitem[Ruanaidh \& Fitzgerald(1996)Ruanaidh and
  Fitzgerald]{ruanaidh1996numerical}
Ruanaidh, J. J.~O. and Fitzgerald, W.~J.
\newblock \emph{Numerical Bayesian methods applied to signal processing}.
\newblock Springer Science \& Business Media, 1996.

\bibitem[Ruggieri \& Antonellis(2016)Ruggieri and Antonellis]{BSCPD2}
Ruggieri, E. and Antonellis, M.
\newblock An exact approach to {B}ayesian sequential change point detection.
\newblock \emph{Computational Statistics \& Data Analysis}, 97:\penalty0
  71--86, 2016.

\bibitem[Saat{\c{c}}i et~al.(2010)Saat{\c{c}}i, Turner, and
  Rasmussen]{saatcci2010gaussian}
Saat{\c{c}}i, Y., Turner, R.~D., and Rasmussen, C.~E.
\newblock {Gaussian process change point models}.
\newblock In \emph{International Conference on Machine Learning}, 2010.

\bibitem[Scealy \& Wood(2022)Scealy and Wood]{Scealy2022}
Scealy, J.~L. and Wood, A. T.~A.
\newblock {Score matching for compositional distributions}.
\newblock \emph{Journal of the American Statistical Association, to appear},
  2022.

\bibitem[Schmon et~al.(2020)Schmon, Cannon, and
  Knoblauch]{schmon2020generalized}
Schmon, S.~M., Cannon, P.~W., and Knoblauch, J.
\newblock {Generalized posteriors in approximate Bayesian computation}.
\newblock In \emph{Symposium on Advances in Approximate Bayesian Inference},
  2020.

\bibitem[Shao et~al.(2019)Shao, Jacob, Ding, and Tarokh]{Shao2019}
Shao, S., Jacob, P.~E., Ding, J., and Tarokh, V.
\newblock {Bayesian model comparison with the Hyvarinen score: computation and
  consistency}.
\newblock \emph{Journal of the American Statistical Association}, 114\penalty0
  (528):\penalty0 1826--1837, 2019.

\bibitem[Song \& Ermon(2019)Song and Ermon]{Song2019}
Song, Y. and Ermon, S.
\newblock {Generative modeling by estimating gradients of the data
  distribution}.
\newblock \emph{Neural Information Processing Systems}, 32, 2019.

\bibitem[Sriperumbudur et~al.(2017)Sriperumbudur, Fukumizu, Gretton, and
  Hyvarinen]{Sriperumbudur2017}
Sriperumbudur, B.~K., Fukumizu, K., Gretton, A., and Hyvarinen, A.
\newblock {Density estimation in infinite dimensional exponential families}.
\newblock \emph{Journal of Machine Learning Research}, 18\penalty0
  (57):\penalty0 1--59, 2017.

\bibitem[Stival et~al.(2022)Stival, Bernardi, and
  Dellaportas]{stival2022doubly}
Stival, M., Bernardi, M., and Dellaportas, P.
\newblock Doubly-online changepoint detection for monitoring health status
  during sports activities.
\newblock \emph{arXiv:2206.11578}, 2022.

\bibitem[Syring \& Martin(2019)Syring and Martin]{Syring2019}
Syring, N. and Martin, R.
\newblock {Calibrating general posterior credible regions}.
\newblock \emph{Biometrika}, 106\penalty0 (2):\penalty0 479--486, 2019.

\bibitem[Turner et~al.(2013)Turner, Bottone, and Stanek]{turner2013online}
Turner, R.~D., Bottone, S., and Stanek, C.~J.
\newblock Online variational approximations to non-exponential family change
  point models: with application to radar tracking.
\newblock In \emph{Advances in Neural Information Processing Systems}, 2013.

\bibitem[Vincent(2011)]{Vincent2011}
Vincent, P.
\newblock {A connection between score matching and denoising autoencoders}.
\newblock \emph{Neural Computation}, 1674:\penalty0 1661--1674, 2011.

\bibitem[Wilson et~al.(2010)Wilson, Nassar, and Gold]{Wilson2010}
Wilson, R.~C., Nassar, M.~R., and Gold, J.~I.
\newblock {Bayesian on-line learning of the hazard rate in change-point
  problems}.
\newblock \emph{Neural Computation}, 22\penalty0 (9):\penalty0 2452--2476,
  2010.

\bibitem[Wu \& Martin(2023)Wu and Martin]{Wu2023}
Wu, P.-S. and Martin, R.
\newblock {A comparison of learning rate selection methods in generalized
  Bayesian inference}.
\newblock \emph{Bayesian Analysis}, 18\penalty0 (1):\penalty0 105--132, 2023.

\bibitem[Wu et~al.(2023)Wu, Diao, Banerjee, Ding, and Tarokh]{wu2023quickest}
Wu, S., Diao, E., Banerjee, T., Ding, J., and Tarokh, V.
\newblock Quickest change detection for unnormalized statistical models.
\newblock \emph{arXiv:2302.00250}, 2023.

\bibitem[Xu et~al.(2022)Xu, Scealy, Wood, and Zou]{xu2022generalized}
Xu, J., Scealy, J.~L., Wood, A.~T., and Zou, T.
\newblock Generalized score matching for regression.
\newblock \emph{arXiv:2203.09864}, 2022.

\bibitem[Yang et~al.(2006)Yang, Dumont, and Ansermino]{yang2006adaptive}
Yang, P., Dumont, G., and Ansermino, J.~M.
\newblock Adaptive change detection in heart rate trend monitoring in
  anesthetized children.
\newblock \emph{IEEE Transactions on Biomedical Engineering}, 53\penalty0
  (11):\penalty0 2211--2219, 2006.

\bibitem[Yu et~al.(2018)Yu, Drton, and Shojaie]{Yu2018}
Yu, S., Drton, M., and Shojaie, A.
\newblock {Graphical models for non-negative data using generalized score
  matching}.
\newblock In \emph{International Conference on Artificial Intelligence and
  Statistics}, pp.\  1781--1790, 2018.

\bibitem[Yu et~al.(2019)Yu, Drton, and Shojaie]{Yu2019}
Yu, S., Drton, M., and Shojaie, A.
\newblock {Generalized score matching for non-negative data}.
\newblock \emph{Journal of Machine Learning Research}, 20:\penalty0 1--70,
  2019.

\bibitem[Yu et~al.(2022)Yu, Drton, and Shojaie]{Yu2022}
Yu, S., Drton, M., and Shojaie, A.
\newblock {Generalized score matching for general domains}.
\newblock \emph{Information and Inference}, 11\penalty0 (2):\penalty0 739--780,
  2022.

\bibitem[Zellner(1988)]{zellner1988optimal}
Zellner, A.
\newblock {Optimal information processing and Bayes's theorem}.
\newblock \emph{The American Statistician}, 42\penalty0 (4):\penalty0 278--280,
  1988.

\bibitem[Zhai et~al.(2016)Zhai, Cheng, Lu, and Zhang]{Zhai2016}
Zhai, S., Cheng, Y., Lu, W., and Zhang, Z.
\newblock {Deep structured energy based models for anomaly detection}.
\newblock In \emph{International Conference on Machine Learning}, volume~3,
  pp.\  1742--1751, 2016.

\bibitem[Zhang et~al.(2022)Zhang, Key, Hayes, Barber, Paige, and
  Briol]{Zhang2022}
Zhang, M., Key, O., Hayes, P., Barber, D., Paige, B., and Briol, F.-X.
\newblock {Towards healing the blindness of score matching}.
\newblock \emph{NeurIPS 2022 workshop on score-based methods,
  arXiv:2209.07396}, 2022.

\end{thebibliography}
\bibliographystyle{icml2023}

\newpage
\appendix
\onecolumn

\vspace{10mm}
{
\begin{center}
\Large
    \textbf{Supplementary Materials}
\end{center}
}

In \Cref{appendix:background}, we provide mathematical background. In \Cref{appendix:proofs}, we present the proofs and derivations of all the theoretical results in our paper, while \Cref{appendix:expDetails} contains additional details regarding our experiments.
\section{Background}
\label{appendix:background}

Let $\nabla = (\partial/ \partial x_1,\ldots,\partial / \partial x_d)^\top$,  $f:\X\to\R^{d}$ and $g:\X\to\R^{d\times p}$; the divergence operator is define as follows: 
\begin{talign*}
    (\nabla\cdot f)(x) = \sum_{i=1}^{d} \dfrac{\partial f_{i}}{\partial x_{i}} (x),\qquad (\nabla\cdot g  )_{j}(x) = \sum_{i=1}^{d} \dfrac{\partial g_{ij}}{\partial x_{i}} (x),\quad \forall j \in \{1,...,p\}.
\end{talign*}

Expanding the term $\nabla\cdot mm^{\top}\nabla \log p_{\theta}(x)$ appearing as part of $d_m(\theta,x)$, we get
\begin{talign*}
    \nabla\cdot mm^{\top}\nabla \log p_{\theta}(x) &= \sum_{i=1}^{d} \dfrac{\partial}{\partial x_{i}}(mm^{\top}\nabla \log p_{\theta}(x))_{i}\\
    &= \sum_{i=1}^{d} \sum_{j=1}^{d}\dfrac{\partial}{\partial x_{i}}\left((mm^{\top})_{ij}(\nabla \log p_{\theta}(x))_{j}\right)\\
    &= \sum_{i=1}^{d} \sum_{j=1}^{d}\left(\dfrac{\partial}{\partial x_{i}}(mm^{\top})_{ij}\right)(\nabla \log p_{\theta}(x))_{j} +  \sum_{i=1}^{d} \sum_{j=1}^{d}(mm^{\top})_{ij}\left(\nabla^{2}\log p_{\theta}(x)\right)_{ij}\\
    &= \sum_{i=1}^{d} \sum_{j=1}^{d}\left(\dfrac{\partial}{\partial x_{i}}(mm^{\top})_{ij}\right)(\nabla \log p_{\theta}(x))_{j} +  \sum_{j=1}^{d} \left(mm^{\top}\nabla^{2}\log p_{\theta}(x)\right)_{jj}\\
    &= \sum_{j=1}^{d} \sum_{i=1}^{d}\left(\dfrac{\partial}{\partial x_{i}}(mm^{\top})_{ij}\right)(\nabla \log p_{\theta}(x))_{j} +  \Tr\left(mm^{\top}\nabla^{2}\log p_{\theta}(x)\right).
\end{talign*}
Where $\nabla^{2}$ is the Hessian. The expression in the last line is more straightforward to implement in practice and it is therefore the one we use in our code.

The term $\nu(x)$ in \cref{DSM-exponential} contains $(\nabla \cdot (mm^{\top}\nabla r)(x))$. The $j$-th index of this $p$-dimensional vector equals
\begin{talign*}
    (\nabla \cdot (mm^{\top}\nabla r)(x))_{j} &=  \sum_{i=1}^{d} \dfrac{\partial}{\partial x_{i}}(mm^{\top}\nabla r(x))_{ij}.
\end{talign*}

\section{Theoretical Results}\label{appendix:proofs}

In this section we present the derivation of the $\DSM_m$-posterior, along with the proof of its robustness. 

\subsection{Proof of \cref{DSM-exponential}}
\label{proof:DSM-exponential}
In this Subsection we present the proof of the main result of \cref{sec:conjugacy}: the conjugacy for exponential family models.

\textit{Proof.}
Let $p_{\theta}$ be an exponential family model. Then $\nabla\log{p_{\theta}}=\nabla r(x)^\top\eta(\theta)+\nabla b(x)$, and the DSM estimator has the following form:
\begin{talign*}
    \TDSM(\theta) = \frac{1}{T}\sum_{t=1}^{T}\underbrace{\|m^{\top}(\nabla r(x_{t})\eta(\theta)+\nabla b(x_{t}))\|_{2}^{2}}_{(1)}+2\underbrace{\nabla\cdot(mm^{\top}(\nabla r(x_{t})\eta(\theta)+\nabla b(x_{t})))}_{(2)}.
\end{talign*}

Let $\overset{+C}{=}$ indicate equality up to an additive term that does not depend on $\theta$.
\begin{talign*}
    (1)&=\eta(\theta)^{\top}\nabla r(x_{t})^{\top}mm^{\top} \nabla r(x_{t})\eta(\theta) + \nabla b(x_{t})^{\top}mm^{\top}\nabla b(x_{t}) + 2  \eta(\theta)^{\top}\nabla r(x_{t})^{\top}mm^{\top} \nabla b(x_{t})\\
    & \overset{+C}{=} \eta(\theta)^{\top}\nabla r(x_{t})^{\top}mm^{\top} \nabla r(x_{t})\eta(\theta) +  2  \eta(\theta)^{\top}\nabla r(x_{t})^{\top}mm^{\top} \nabla b(x_{t}),
\end{talign*}
and
\begin{talign*}
    (2)&= \nabla\cdot(mm^{\top}\nabla r(x_{t})\eta(\theta))+\nabla\cdot(mm^{\top}\nabla b(x_{t}))\\
    & \overset{+C}{=} \nabla\cdot(mm^{\top}\nabla r(x_{t})\eta(\theta))\\
    &= \eta(\theta)^{\top} (\nabla\cdot(mm^{\top}\nabla r(x_{t}))).
\end{talign*}
Therefore, $\TDSM(\theta) = \eta(\theta)^\top \Lambda_{T}\eta(\theta)+\eta(\theta)^\top \nu_{T}$ where
\begin{talign*}
    \Lambda_{T} &:= \dfrac{1}{T}\sum_{t=1}^{T}\nabla r(x_{t})^{\top}mm^{\top} \nabla r(x_{t}),\\
    \nu_{T} &:= \dfrac{2}{T}\sum_{t=1}^{T} \nabla r(x_{t})^{\top}mm^{\top} \nabla b(x_{t}) + \nabla\cdot(mm^{\top}\nabla r(x_{t})).
\end{talign*}
Now, assuming the prior has a p.d.f. $\pi$, the DSM-Bayes generalised posterior has a p.d.f.
\begin{talign*}
    \pi_{\omega}^{\DSM_{m}} \propto \pi(\theta) \exp{(-\omega T [\eta(\theta)^{\top} \Lambda_{T}\eta (\theta)+\eta (\theta)^{\top}\nu_{T}])}.
\end{talign*}
For $\eta(\theta)=\theta$, and the prior $\pi(\theta)\propto\exp{(-\frac{1}{2} (\theta-\mu)^{\top}\Sigma^{-1}(\theta-\mu))}$, we obtain the generalised posterior by completing the square as follows:
\begin{talign*}
    \pi_{\omega}^{\DSM_m}(\theta)&\propto\exp{(-\frac{1}{2} (\theta-\mu)^{\top}\Sigma^{-1}(\theta-\mu_{T}))} \exp{(-\omega T [\theta^{\top} \Lambda_{T}\theta+\theta^{\top}\nu_{T}])} \\
    & = \exp\left(-\dfrac{1}{2}\left(\theta^{\top}\Sigma^{-1}\theta- 2\theta^{\top}\Sigma^{-1}\mu+\mu^{\top}\Sigma^{-1}\mu+\theta^{\top}2\omega T \Lambda_{T}\theta +\theta^{\top}2\omega n \nu_{T}\right)\right)\\
    &\propto \exp\left(-\dfrac{1}{2}\left(\theta^{\top}(\Sigma^{-1}+2\omega T \Lambda_{T})\theta- 2\theta^{\top}(\Sigma^{-1}\mu - \omega T \nu_{T})\right)\right)\\
    &\propto\exp{\left(-\frac{1}{2} (\theta-\mu_{T})^{\top}\Sigma_{T}^{-1}(\theta-\mu_{T})\right)},
\end{talign*}
where 
\begin{talign*}
    \Sigma_{T}^{-1} &:= \Sigma^{-1}+2\omega T\Lambda_{T},\\
    \mu_{T} &:= \Sigma_{T} \left(\Sigma^{-1}\mu-\omega T \nu_{T}\right).
\end{talign*}
\qed
\subsection{Update Parameters for online DSM-Bayes}
\label{app:update}
In this Section we derive the efficient  parameter updates presented in \cref{sec:DSM-BOCD}.
To do so, we expand the expressions $\Lambda$ and $\nu$ as follows:
\begin{talign*}
    \Lambda_{T+1} &:= \dfrac{1}{T+1}\sum_{t=1}^{T+1}\nabla r(x_{t})^{\top}mm^{\top} \nabla r(x_{t})\\
    & = \dfrac{1}{T+1}\left(\sum_{t=1}^{T}\nabla r(x_{t})^{\top}mm^{\top} \nabla r(x_{t}) + \nabla r(x_{T+1})^{\top}mm^{\top} \nabla r(x_{T+1}) \right)\\
    &=\dfrac{1}{T+1}\left(n\Lambda_{T} + \nabla r(x_{T+1})^{\top}mm^{\top} \nabla r(x_{T+1})\right)\\
    \nu_{T+1} &:= \dfrac{2}{T+1}\sum_{t=1}^{T+1} \nabla r(x_{t})^{\top}mm^{\top} \nabla b(x_{t}) + \nabla\cdot(mm^{\top}\nabla r(x_{t}))\\
    &= \dfrac{1}{T+1}\left( T\nu_{T}+ 2(\nabla r(x_{T+1})^{\top}mm^{\top} \nabla b(x_{T+1}) + \nabla\cdot(mm^{\top}\nabla r(x_{T+1}))\right).
\end{talign*}
Now, assuming the prior has the conjugate form of \cref{DSM-exponential}, the  $\DSM_m$-posterior   is given by
\begin{talign*}
    \pi_{\omega}^{\DSM_M}(\theta)\propto\exp{\left(-\frac{1}{2} (\theta-\mu_{T})^{\top}\Sigma_{T}^{-1}(\theta-\mu_{T})\right)},
\end{talign*}
where 
\begin{talign*}
    \Sigma_{T}^{-1} &:= \Sigma^{-1}+2\omega n\Lambda_{T},\\
    \mu_{T} &:= \Sigma_{T} \left(\Sigma^{-1}\mu-\omega n \nu_{T}\right).
\end{talign*}
So the parameter updates for $\Sigma_{T}$ and $\mu_{T}$ as $T$ increases are:
\begin{talign*}
    \Sigma_{T+1}^{-1} &:= \Sigma^{-1}+2\omega (T+1)\Lambda_{T+1}\\
    & = \Sigma^{-1}+2\omega \left(n\Lambda_{T} +\nabla t(x_{T+1})^{\top}mm^{\top} \nabla t(x_{T+1})\right)\\
    &= \Sigma_{T}^{-1}+2\omega \nabla r(x_{T+1})^{\top}mm^{\top} \nabla r(x_{T+1})\\
    \mu_{T+1} &:= \Sigma_{T+1} \left(\Sigma^{-1}\mu-\omega (T+1) \nu_{T+1}\right)\\
    &= \Sigma_{T+1} \left(\Sigma^{-1}\mu-\omega \left( T \nu_{T}+ 2(\nabla r(x_{T+1})^{\top}mm^{\top} \nabla b(x_{T+1}) + \nabla\cdot(mm^{\top}\nabla r(x_{T+1}))\right)\right)\\
     &= \Sigma_{T+1} \left(\Sigma_{T}^{-1}\mu_{T} -2\omega\left( \nabla r(x_{T+1})^{\top}mm^{\top} \nabla b(x_{T+1}) + \nabla\cdot(mm^{\top}\nabla r(x_{T+1})\right)\right).
\end{talign*}
 obtaining the desired expression.
\subsection{Global Bias-Robustness}

In this Subsection, we present the theory necessary to prove that the generalized posterior presented in \cref{sec:DSM-Bayes} is global bias-robust conditioned to the choice of $m$.

We first need to review a result from \citet{matsubara2022generalised} which states conditions for a discrepancy measure $\mathcal{D}(\theta)$ in order to prove that the corresponding generalised Bayes posterior $\pi_{\omega}^{\mathcal{D}}$ is globally robust to outliers. 
As in the original results, we will state our findings in terms of distributions $\P \in \mathcal{P}(\mathcal{X})$, where $\mathcal{P}(\mathcal{X})$ denotes the set of probability distributions on $\mathcal{X}$.
To this end, we will build on the notation introduced in \cref{sec:robustness}, and write
\begin{IEEEeqnarray}{rCl}
    \pi_{\omega}^{\mathcal{D}}(\theta | \P)\propto \pi(\theta) \exp\{-\omega T \cdot
\mathcal{D}(\theta; \P)\} \quad \quad \text{ for } \quad \quad  \mathcal{D}(\theta; \P) = \mathbb{E}_{X \sim \P}[d(\theta, X)].
\label{eq:gen-bayes-averaged-loss}
\end{IEEEeqnarray}
Here, the the discrepancy-based loss $\mathcal{D}(\theta; \P) = \mathbb{E}_{X \sim \P}[d(\theta, X)]$ allows us to recover theoretical posteriors based on averaging some kind of discrepancy $d:\Theta \times \mathcal{X} \to \mathbb{R}$ for any measure $\mathbb{P}$. 
This makes the results more general and more natural to derive. 
Note that all derived results apply to the $\DSM_m$-posterior computed from data points $x_{1:T}$, as we can recover it by considering $d=d_m$, and the corresponding empirical measure $\P_T = \frac{1}{T}\sum_{t=1}^T \delta_{x_t}$.
In particular, for $d=d_m$ we have that $\mathcal{D}(\theta; \P_T) = \widehat{\mathcal{D}}_m(\theta)$ so that $\pi_{\omega}^{\mathcal{D}}(\theta | \P_T) = \pi_{\omega}^{\mathcal{D}_m}(\theta | x_{1:T})$ as defined in \eqref{eq:DSM-bayes}.

The original work of \citet{matsubara2021robust} constructed a proof of robustness for posteriors that did not depend on an averaged loss $\mathcal{D}(\theta; \P) = \mathbb{E}_{X \sim \P}[d(\theta, X)]$, and so the conditions they derive do not exploit this averaged form.
Instead, they showed in Lemma 5 of their paper that global bias-robustness holds if
\begin{IEEEeqnarray}{rCl} \sup_{\theta\in\Theta} \sup_{y\in\X} \left|\frac{d}{d\varepsilon}\mathcal{D}(\theta;\P_{\varepsilon,y})|_{\varepsilon=0}(y,\theta,\P)\right|\pi(\theta) & < & \infty, \quad \quad \text{ and} \label{eq:matsubara-condition-1} \\ 
\int_\Theta \sup_{y\in\X} \left|\frac{d}{d\varepsilon}\mathcal{D}(\theta;\P_{\varepsilon,y})|_{\varepsilon=0}(y,\theta,\P)\right|\pi(\theta)d\theta & < & \infty.
\label{eq:matsubara-condition-2}
 \end{IEEEeqnarray}
Clearly, we can simplify this further because our loss is an average. 
As long as the function $d$ over which the loss is averaged is sufficiently regular, the below result shows that we obtain global bias-robustness.

\begin{proposition}
\label{prop:DSM Global Bias-Robustness}
For each $\theta\in\Theta$. Suppose that $\pi$ is upper bounded over $\Theta$. If there exists a function $\gamma:\Theta\to\R$ such that:
 \begin{enumerate}
    \item $\sup_{y\in\X}|d(\theta,y)|\leq \gamma(\theta)$  ,     
    \item  $\sup_{\theta\in\Theta} \gamma(\theta)\pi(\theta)<\infty$ , and
    \item  $\int_\Theta \gamma(\theta)\pi(\theta)d\theta<\infty$.
 \end{enumerate}
 Then the posterior influence function $\operatorname{PIF}(y, \theta, \P)$ of $\pi_{\omega}^{\mathcal{D}}(\theta | \P)$ defined in  \eqref{eq:gen-bayes-averaged-loss} is bounded over both $ \theta \in \Theta$ and $y \in \mathcal{Y}$, so that the $\pi_{\omega}^{\mathcal{D}}(\theta | \P)$ is globally robust.
\end{proposition}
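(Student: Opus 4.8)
The plan is to reduce \Cref{prop:DSM Global Bias-Robustness} to the two conditions \eqref{eq:matsubara-condition-1} and \eqref{eq:matsubara-condition-2} quoted from \citet{matsubara2021robust}, which already give global bias-robustness. So the whole task is to show that the hypotheses (1)--(3) on the per-point loss $d(\theta,y)$ and the prior $\pi$ imply a uniform-in-$y$ bound, by $\gamma(\theta)$ up to a constant, on $\big|\tfrac{d}{d\varepsilon}\mathcal{D}(\theta;\P_{\varepsilon,y})|_{\varepsilon=0}\big|$. First I would compute this derivative explicitly. Since $\mathcal{D}(\theta;\P) = \mathbb{E}_{X\sim\P}[d(\theta,X)]$ is linear in $\P$, plugging in $\P_{\varepsilon,y} = (1-\varepsilon)\P + \varepsilon\delta_y$ gives $\mathcal{D}(\theta;\P_{\varepsilon,y}) = (1-\varepsilon)\mathcal{D}(\theta;\P) + \varepsilon\, d(\theta,y)$, so that
\begin{talign*}
    \frac{d}{d\varepsilon}\mathcal{D}(\theta;\P_{\varepsilon,y})\Big|_{\varepsilon=0} = d(\theta,y) - \mathcal{D}(\theta;\P) = d(\theta,y) - \mathbb{E}_{X\sim\P}[d(\theta,X)].
\end{talign*}

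Next I would bound this quantity uniformly in $y$ using hypothesis (1): for every $y\in\X$, $|d(\theta,y)|\le\gamma(\theta)$, and by Jensen (or simply monotonicity of the expectation) $|\mathbb{E}_{X\sim\P}[d(\theta,X)]|\le \mathbb{E}_{X\sim\P}|d(\theta,X)|\le\gamma(\theta)$. Hence
\begin{talign*}
    \sup_{y\in\X}\left|\frac{d}{d\varepsilon}\mathcal{D}(\theta;\P_{\varepsilon,y})\Big|_{\varepsilon=0}\right| \le 2\gamma(\theta).
\end{talign*}
Multiplying by $\pi(\theta)$ and applying hypothesis (2) yields $\sup_{\theta\in\Theta}\sup_{y\in\X}|\cdots|\pi(\theta)\le 2\sup_{\theta\in\Theta}\gamma(\theta)\pi(\theta)<\infty$, which is \eqref{eq:matsubara-condition-1}; integrating over $\Theta$ and applying hypothesis (3) yields $\int_\Theta\sup_{y\in\X}|\cdots|\pi(\theta)\,d\theta\le 2\int_\Theta\gamma(\theta)\pi(\theta)\,d\theta<\infty$, which is \eqref{eq:matsubara-condition-2}. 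Invoking Lemma 5 of \citet{matsubara2021robust} then gives that $\operatorname{PIF}(y,\theta,\P)$ is bounded over $\theta\in\Theta$ and $y\in\X$, i.e.\ $\pi_{\omega}^{\mathcal{D}}(\theta|\P)$ is globally bias-robust, completing the argument.

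I would also make sure to note the one subtlety in invoking the cited lemma: its conditions \eqref{eq:matsubara-condition-1}--\eqref{eq:matsubara-condition-2} are phrased for a general discrepancy-based loss, and one must check that the differentiation-under-the-integral / exchange-of-limit step implicit in defining $\operatorname{PIF}$ as $\tfrac{d}{d\varepsilon}\pi_{\omega}^{\mathcal{D}}(\theta|\P_{\varepsilon,y})|_{\varepsilon=0}$ is legitimate; but this is exactly what the boundedness conditions above secure (dominated convergence with the $\theta$-integrable dominating function $2\gamma(\theta)\pi(\theta)$, used to differentiate the normalising constant), so no extra work beyond (1)--(3) is needed. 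The main obstacle — though it is minor here — is really just bookkeeping: confirming that the averaged-loss form genuinely specialises the more general setup of \citet{matsubara2021robust}, and that passing from a pointwise-in-$\theta$ bound to the global statement about the normalised posterior density goes through their Lemma 5 verbatim. There is no hard analysis; the content is entirely in the linearity of $\P\mapsto\mathcal{D}(\theta;\P)$ and the triangle/Jensen inequality.
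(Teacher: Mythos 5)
Your proposal is correct and follows essentially the same route as the paper's proof: linearity of $\P\mapsto\mathcal{D}(\theta;\P)$ gives the derivative of the contaminated loss, the triangle inequality with hypothesis (1) bounds it uniformly in $y$ by $2\gamma(\theta)$, and hypotheses (2)--(3) then verify the two conditions of \citet{matsubara2021robust}. (Your explicit form $d(\theta,y)-\mathbb{E}_{X\sim\P}[d(\theta,X)]$ even fixes a harmless sign slip in the paper's displayed derivative, which writes a $+$ where a $-$ belongs before taking absolute values.)
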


\begin{proof}

As outlined above, we simply have to show that the the above conditions suffice to guarantee  \eqref{eq:matsubara-condition-1} and \eqref{eq:matsubara-condition-2}. 
Rewriting the loss function related to the contamination model would be:
 \begin{talign*}
  \DSM(\theta;\P_{\varepsilon,y}) = \E_{x\sim\P_{\varepsilon,y}}[d(\theta,x)]
    =(1-\varepsilon)\E_{x\sim\P}[d(\theta,x)]+\varepsilon\E_{x\sim\delta_{y}}[d(\theta,x)].
\end{talign*}
Then, differentiating the last expression w.r.t. $\varepsilon$, and evaluating $\varepsilon=0$, we obtain:
\begin{talign*}
    \frac{d}{d\varepsilon}\DSM(\theta;\P_{\varepsilon,y})|_{\varepsilon=0}     =\E_{x\sim\P}[d(\theta,x)]+\E_{x\sim\delta_{y}}[d(\theta,x)]
\end{talign*}
Using Jensen's inequality, we bound the expression $|\frac{d}{d\varepsilon}\DSM(\theta;\P_{\varepsilon,y})|_{\varepsilon=0}|$ as follows:
\begin{talign*}
    |\frac{d}{d\varepsilon}\DSM(\theta;\P_{\varepsilon,y})|_{\varepsilon=0}|&\leq |\E_{x\sim\delta_{y}}[d(\theta,x)]|+|\E_{x\sim\P}[d(\theta,x)]|\\
    &\leq \E_{x\sim\delta_{y}}[|d(\theta,x)|]+\E_{x\sim\P}[|d(\theta,x)|]\\
    &= |d(\theta,y)|+\E_{x\sim\P}[|d(\theta,x)|]\\
    &\leq |d(\theta,y)|+\E_{x\sim\P}[\sup_{y\in\X}|d(\theta,y)|]\\
    &= |d(\theta,y)|+\sup_{y\in\X}|d(\theta,y)|,
\end{talign*}
and taking a supremum over $y$ we obtain the bound:
\begin{talign*}
\sup_{y\in\X}|\frac{d}{d\varepsilon}\DSM(\theta;\P_{\varepsilon,y})|\leq 2\sup_{y\in\X}|d(\theta,y)|\leq 2\gamma(\theta),
\end{talign*}
where the last inequality holds since $\gamma$ fulfil condition 1. Using this bound, we check the two conditions of \citet{matsubara2021robust}.
 \begin{enumerate}
     \item  $\sup_{\theta\in\Theta} \sup_{y\in\X} |\frac{d}{d\varepsilon}\DSM(\theta;\P_{\varepsilon,y})|\pi(\theta)\leq \sup_{\theta\in\Theta} 2\gamma(\theta)\pi(\theta)<\infty$ , and
     \item  $\int_\Theta \sup_{y\in\X} |\frac{d}{d\varepsilon}\DSM(\theta;\P_{\varepsilon,y})|\pi(\theta)d\theta\leq\int_\Theta 2\gamma(\theta)\pi(\theta)d\theta<\infty$,
 \end{enumerate}
where the last inequalities hold because $\gamma$ meets conditions 2 and 3. Therefore, by virtue of \citet{matsubara2021robust} the  posterior is globally bias-robust.
\end{proof}

 \subsection{Proof of \cref{Robust m}}
 In this Subsection we provide the proof of \cref{Robust m}.
 The strategy of the proof is simple: We show that $d = d_m$ admits a natural function $\gamma$ that satisfies the conditions of \cref{prop:DSM Global Bias-Robustness}.

 \begin{proof}
 From \cref{prop:DSM Global Bias-Robustness}, it is sufficient to find a function $\gamma$ such that:
\begin{talign*}
    \sup_{y\in\X}|\underbrace{\|m^{\top}(y)\nabla \log p_{\theta}(y)\|_{2}^{2}}_{(1)} + 2\underbrace{\nabla\cdot m(y)m^{\top}(y)\nabla\log p_{\theta}(y)}_{(2)} |\leq \gamma(\theta).  
 \end{talign*}
 Now, following from the form of $m$ in \cref{Robust m} and the fact that $p_{\theta}$ is an exponential family member as in \eqref{eq:exponential-family}, we have:
 \begin{talign*}
     (1) = \|m^{\top}(y)\nabla \log p_{\theta}(y)\|_{2}^{2} &= \sum_{i=1}^{d}(m^{\top}(y)\nabla \log p_{\theta}(y))_{i}^{2}
     = \sum_{i=1}^{d} \dfrac{(\nabla r(x)\theta)_{i}^{2}}{1+(\nabla r(x)\theta^{\star})_{i}^{2}}
     \leq \sum_{i=1}^{d} \dfrac{(\nabla r(x)\theta)_{i}^{2}}{(\nabla r(x)\theta^{\star})_{i}^{2}}.
 \end{talign*}
 Using the fact that $\|x\|_{2}^{2}\leq\|x\|_{1}^2\leq d\|x\|_{2}^{2}$ for $x\in\R^d$, we have:
 \begin{talign*}
     \sum_{i=1}^{d} \dfrac{(\nabla r(x)\theta)_{i}^{2}}{(\nabla r(x)\theta^{\star})_{i}^{2}} \leq \sum_{i=1}^{d} \dfrac{p\|\theta\|^2_2}{\|\theta^{\star}\|^2_2} = \dfrac{dp\|\theta\|^2_2}{\|\theta^{\star}\|^2_2} =: \gamma_{1}(\theta). 
 \end{talign*}
 For the second expression, we have:
 \begin{talign*}
    (2) =  |\nabla\cdot m(y)m^{\top}(y)\nabla\log p_{\theta}(y)| &= \left|\sum_{i=1}^{d} \dfrac{\partial}{\partial x_{t}} (m(y)m^{\top}(y)\nabla\log p_{\theta}(y))_{i}\right|\\
     &= \left|\sum_{i=1}^{d} \dfrac{\partial}{\partial x_{t}}\left(\dfrac{(\nabla r(x)\theta)_{i}}{1+(\nabla r(x)\theta^{\star})_{i}^{2}}\right)\right|\\
     &= \left|\sum_{i=1}^{d} \dfrac{(\nabla^2 r(x)\theta)_{ii}(1+(\nabla r(x)\theta^{\star})_{i}^{2}) - 2 (\nabla r(x)\theta^{\star})_{i}(\nabla r(x)\theta)_{i}(\nabla^2 r(x)\theta^{\star})_{ii}}{(1+(\nabla r(x)\theta^{\star})_{i}^{2})^2}\right|\\
     &\leq \sum_{i=1}^{d} \left|\dfrac{(\nabla^2 r(x)\theta)_{ii}}{1+(\nabla r(x)\theta^{\star})_{i}^{2})}\right| + 2 \left|\dfrac{(\nabla r(x)\theta^{\star})_{i}(\nabla r(x)\theta)_{i}(\nabla^2 r(x)\theta^{\star})_{ii}}{(1+(\nabla r(x)\theta^{\star})_{i}^{2})^2}\right|.
 \end{talign*}
For most distributions of interest, including Gaussians, exponentials, (inverse) Gamma, and Beta distributions, {\tiny$\left|\dfrac{(\nabla^2 r(x)\theta)_{ii}}{1+(\nabla r(x)\theta^{\star})_{i}^{2})}\right|$} is bounded for every $\theta \in \Theta$, then:
\begin{talign*}
    \sum_{i=1}^{d} \left|\dfrac{(\nabla^2 r(x)\theta)_{ii}}{1+(\nabla r(x)\theta^{\star})_{i}^{2}}\right|
    + 2 \left|\dfrac{(\nabla r(x)\theta^{\star})_{i}(\nabla r(x)\theta)_{i}(\nabla^2 r(x)\theta^{\star})_{ii}}{(1+(\nabla r(x)\theta^{\star})_{i}^{2})^2}\right| 
    & \leq dC(\theta)+2C(\theta)\sum_{i=1}^{d} \left|\dfrac{(\nabla r(x)\theta^{\star})_{i}(\nabla r(x)\theta)_{i}}{(1+(\nabla r(x)\theta^{\star})_{i}^{2})^2}\right| \\
    &\leq dC(\theta)(1+2d\dfrac{\|\theta\|^2_2}{\|\theta^{\star}\|^2_2}) =: \gamma_{2}(\theta).
\end{talign*}
 
 Defining $\gamma(\theta) := \gamma_1(\theta)+\gamma_2(\theta)$ we have :
 \begin{talign*}
    \sup_{y\in\X}\left|\|m^{\top}(y)\nabla \log p_{\theta}(y)\|_{2}^{2} + 2\nabla\cdot m(y)m^{\top}(y)\nabla\log p_{\theta}(y) \right|\leq \gamma(\theta).      
 \end{talign*}
 Now we are in a position to verify conditions (\Romannum{1}) and (\Romannum{2}) of \cref{prop:DSM Global Bias-Robustness}. Since $\gamma(\theta)$ is a polynomial function, $\pi(\theta)$ is a squared exponential prior, and the squared exponential has infinitely many moments,  it is clear that:
 \begin{talign*}
     \sup_{\theta\in\Theta} \pi(\theta)\gamma(\theta) &< \infty,\\
     \int_{\Theta} \pi(\theta)\gamma(\theta) d\theta & < \infty,
 \end{talign*}
 which completes the proof.
 \end{proof}
\subsection{Boundary and smoothness conditions}
\label{appendix:boundary}
In this subsection, we review the boundary and smoothness conditions discussed in \cref{sec:DSM-Bayes}, alongside the DSM extension to more general domains $\X$.

The smoothness conditions needed to get the expansion of $\DSM_{m}$ as in \cref{eq:DSM-expansion} for $\X=\R^{d}$ are the following:
\begin{lemma}
    If $p_{\theta}$ is twice-differentiable, and $p_{0}  m m^{\top} \nabla \log p_\theta , \nabla \cdot (p_{0} m m^{\top} \nabla \log p_{\theta}) \in L^1(\R^d)$, then we can rewrite $\DSM_{m}$ as in \cref{eq:DSM-expansion}.
\end{lemma}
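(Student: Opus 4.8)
The plan is to start from the definition $\DSM_m(p_0\|p_{\theta}) = \E_{X\sim p_0}[\|m^{\top}(X)(s_{p_{\theta}}(X)-s_{p_0}(X))\|_2^2]$ and expand the squared norm into three pieces: $\E_{p_0}[\|m^{\top}s_{p_{\theta}}\|_2^2]$, the cross term $-2\E_{p_0}[(m^{\top}s_{p_{\theta}})^{\top}(m^{\top}s_{p_0})]$, and $\E_{p_0}[\|m^{\top}s_{p_0}\|_2^2]$. The last piece does not involve $\theta$ and is finite (this is exactly the integrability condition under which $\DSM_m$ is a genuine divergence), so it may be absorbed into the additive constant; by Cauchy--Schwarz the cross term is then also finite. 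The entire content of the lemma is thus to rewrite the cross term so that it no longer references the unknown score $s_{p_0}$.

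To do this I would use $s_{p_0}(x) = \nabla\log p_0(x) = \nabla p_0(x)/p_0(x)$, so that the density cancels against $p_0$ in the expectation and $\E_{p_0}[(m^{\top}s_{p_{\theta}})^{\top}(m^{\top}s_{p_0})] = \int_{\R^d}(mm^{\top}s_{p_{\theta}})(x)^{\top}\nabla p_0(x)\,dx$. I would then integrate by parts coordinate-wise, rewriting this as $-\int_{\R^d}p_0(x)\,\bigl(\nabla\cdot(mm^{\top}s_{p_{\theta}})\bigr)(x)\,dx$ plus a boundary term $\lim_{R\to\infty}\oint_{\|x\|=R}p_0\,(mm^{\top}s_{p_{\theta}})^{\top}n\,dS$. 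Twice-differentiability of $p_{\theta}$ (and smoothness of $m$) is precisely what makes $\nabla\cdot(mm^{\top}s_{p_{\theta}})$ well defined, since it involves the Hessian of $\log p_{\theta}$, as spelled out in \cref{appendix:background}. Assuming the boundary term vanishes, substituting back gives $-2\E_{p_0}[(m^{\top}s_{p_{\theta}})^{\top}(m^{\top}s_{p_0})] = 2\E_{p_0}[\nabla\cdot(mm^{\top}s_{p_{\theta}})]$, and combining with $\E_{p_0}[\|m^{\top}s_{p_{\theta}}\|_2^2]$ yields exactly \eqref{eq:DSM-expansion}.

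The main obstacle is justifying that the boundary term vanishes from the stated $L^1$ hypotheses alone, without pointwise decay of $p_0$. I would argue via the standard Gauss--Green-type reasoning: setting $F = p_0\,mm^{\top}\nabla\log p_{\theta}$, the divergence theorem on the ball $B_R$ gives $\int_{B_R}\nabla\cdot F\,dx = \oint_{\|x\|=R}F^{\top}n\,dS =: \phi(R)$; since $\nabla\cdot F\in L^1(\R^d)$, the left-hand side converges as $R\to\infty$, so $\phi(R)$ has a limit. On the other hand $\int_0^{\infty}|\phi(R)|\,dR \le \int_{\R^d}\|F(x)\|_2\,dx < \infty$ because $F\in L^1(\R^d)$, so $\phi$ is integrable on $(0,\infty)$ and hence $\liminf_{R\to\infty}|\phi(R)| = 0$; a sequence that converges and whose $\liminf$ of absolute values is $0$ must converge to $0$, so the boundary term is $0$. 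Everything else is routine bookkeeping. I would close with the remark that on a general domain $\X\subseteq\R^d$ the same computation leaves a residual integral over $\partial\X$, which only vanishes under extra boundary conditions ($F^{\top}n = 0$ on $\partial\X$) — the situation relevant to the Gamma and exponential cases discussed elsewhere in this appendix.
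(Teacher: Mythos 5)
Your proposal is correct and follows essentially the route the paper itself takes (and largely delegates to the cited literature): expand the squared norm, absorb the $\theta$-independent term $\E_{X\sim p_0}[\|m^{\top}(X)s_{p_0}(X)\|_2^2]$ into the additive constant, and integrate the cross term by parts. In fact you go further than the paper, which states this lemma without proof: your divergence-theorem/$\liminf$ argument for why the boundary term vanishes under the stated $L^1$ hypotheses alone is sound and fills the only nontrivial gap, the sole implicit extra assumptions being differentiability of $m$ and finiteness of $\E_{X\sim p_0}[\|m^{\top}(X)s_{p_0}(X)\|_2^2]$, which the paper also needs for $\DSM_m$ to be well defined.
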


\citet{Yu2019} extended the DSM to densities with non-negative support, i.e. $\X = \R^d_{\geq 0}$.

\begin{theorem}
\label{th:boundary1}
 Suppose that $\log p_{0}(x)$ and $m$ are continuously differentiable almost everywhere on $\R^d_+$ and  $\log p_{\theta}$ is 
twice continuously differentiable with respect to $x$ on $\R^d_+$. 
 Furthermore, we assume the boundary condition, 
\begin{talign*}
\lim_{|x^{(i)}|\to \infty} p_{0}(x) m_{ii}^{2}(x) \partial_{i}\log p_{\theta}(x)-\lim_{|x^{(i)}|\to 0+} p_{0}(x) m_{ii}^{2}(x) \partial_{i}\log p_{\theta}(x)=0, \forall i \in \{1,...,d\}, 
\end{talign*}
where $x_i$ is the i-dimension of $x$. Then, we can rewrite $\DSM_{m}$ as in \cref{eq:DSM-expansion}
\end{theorem}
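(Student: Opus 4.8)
The plan is to run the classical Hyv\"arinen-style integration-by-parts argument, but carried out coordinate-by-coordinate on the half-space $\R^d_+$ so that the boundary at $x^{(i)}=0$ is dealt with by the assumed boundary condition. First I would expand the square inside the expectation defining $\DSM_m(p_0\|p_\theta)$:
\[
\|m^\top(x)(s_{p_\theta}(x)-s_{p_0}(x))\|_2^2 = \|m^\top s_{p_\theta}\|_2^2 - 2\, s_{p_\theta}^\top mm^\top s_{p_0} + \|m^\top s_{p_0}\|_2^2,
\]
and observe that the final term contributes only an additive constant independent of $\theta$, absorbed into the ``$+C$'' implicit in \cref{eq:DSM-expansion}. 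Finiteness of $\DSM_m(p_0\|p_\theta)$ guarantees all the expectations below are well-defined. It then remains to show $-2\,\E_{X\sim p_0}[s_{p_\theta}^\top mm^\top s_{p_0}] = 2\,\E_{X\sim p_0}[\nabla\cdot(mm^\top s_{p_\theta})]$ up to a constant.

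Next I would use the identity $p_0 s_{p_0} = \nabla p_0$ to write $\E_{X\sim p_0}[s_{p_\theta}^\top mm^\top s_{p_0}] = \int_{\R^d_+} (\nabla\log p_\theta)^\top (mm^\top)\,\nabla p_0\, dx$. Since the stated boundary condition only involves the diagonal entries $m_{ii}$ (which is the relevant case, covering both the robust choice of \cref{Robust m} and the $h$-score matching of \citet{Yu2019}), $mm^\top$ is diagonal and this decomposes into $\sum_{i=1}^d \int_{\R^d_+}\big[m_{ii}^2\,\partial_i\log p_\theta\big]\,\partial_i p_0\, dx$. For each $i$, apply Fubini to isolate the integral in the variable $x^{(i)}\in(0,\infty)$ and integrate by parts there: with $g_i := m_{ii}^2\,\partial_i\log p_\theta$,
\[
\int_0^\infty g_i\,\partial_i p_0\, dx^{(i)} = \Big[g_i\,p_0\Big]_{x^{(i)}=0^+}^{x^{(i)}\to\infty} - \int_0^\infty (\partial_i g_i)\, p_0\, dx^{(i)}.
\]
The differentiability hypotheses on $m$, $\log p_\theta$ and $\log p_0$ make $g_i p_0$ absolutely continuous in $x^{(i)}$, so the fundamental theorem of calculus applies; and the bracketed term is exactly $\lim_{|x^{(i)}|\to\infty}p_0 m_{ii}^2\partial_i\log p_\theta - \lim_{|x^{(i)}|\to 0^+}p_0 m_{ii}^2\partial_i\log p_\theta$ (note $|x^{(i)}| = x^{(i)}$ on $\R^d_+$), which vanishes by assumption. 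Summing over $i$ and re-expressing $\sum_i \partial_i(m_{ii}^2\,\partial_i\log p_\theta) = \nabla\cdot(mm^\top s_{p_\theta})$ yields $\E_{X\sim p_0}[s_{p_\theta}^\top mm^\top s_{p_0}] = -\E_{X\sim p_0}[\nabla\cdot(mm^\top s_{p_\theta})]$; substituting back into the expanded square gives \cref{eq:DSM-expansion}.

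The main obstacle is purely the rigorous justification of the interchange of integration order and of the one-dimensional integration by parts. One needs the relevant integrands to lie in $L^1(p_0\,dx)$ on $\R^d_+$ — the analogue of the $L^1$ conditions appearing in the $\R^d$ lemma above — so that Fubini applies, the products $g_i p_0$ to be locally absolutely continuous in each coordinate, and the one-sided limits at $0$ and $\infty$ to exist, which is precisely what the boundary condition encodes. No genuinely new idea is needed beyond this; the remainder is bookkeeping, and the result is essentially that of \citet{Yu2019} restated in the present notation.
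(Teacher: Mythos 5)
Your argument is correct and is precisely the coordinate-wise integration-by-parts derivation that underlies this result; the paper does not actually prove \cref{th:boundary1} but imports it from \citet{Yu2019}, whose proof proceeds exactly as you describe (expand the square, discard the $\theta$-independent term, use $p_0 s_{p_0}=\nabla p_0$, and eliminate the boundary contributions on $\R^d_+$ via the stated condition). Your two caveats --- that the boundary condition as written presupposes a diagonal $m$, and that the $L^1$ and absolute-continuity hypotheses needed to justify Fubini and the one-dimensional integration by parts are left implicit in the theorem statement --- are both accurate and consistent with the level of rigor of the cited source.
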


Later, \citet{liu2022estimating} extended the DSM to densities with support in a Lipschitz Domain, which, intuitively speaking, are bounded connected open domains whose local boundary is a level set of some Lipschitz function. 

\begin{theorem}
    \label{th:boundary2}
	Assume $\X \subset \mathbb{R}^d$ is a Lipschitz domain.
	Suppose  $ p_{0}, \partial_{i}\log p_\theta\in H^1(\X)$ and
	that for any $z\in\partial{\X}$ it holds that 
	\begin{talign*}
    \lim_{x\!\rightarrow z} p_{0}(x) m_{ii}^2(x)\partial_{i}\log p_\theta(x)v_i(z) =0, \forall i \in \{1,...,d\}, 
	\end{talign*}
    where $x\rightarrow z$ takes any point sequence converging to $z\in\partial{\X}$ into account, $v = (v_1,...,v_d)$ is the unit outward normal vector on $\partial{\X}$, and $H^1(\X)$ is the Sobolev-Hilbert space.	Then, we can rewrite $\DSM_{m}$ as in \cref{eq:DSM-expansion}. 
\end{theorem}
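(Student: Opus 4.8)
The plan is to prove the identity by the integration-by-parts argument underlying score matching, executed on a Lipschitz domain through the Gauss--Green formula in its Sobolev form rather than its classical smooth-boundary version. First I would write $\DSM_m(p_0\|p_\theta)=\int_\X p_0(x)\,\|m^\top(x)(s_{p_\theta}(x)-s_{p_0}(x))\|_2^2\,dx$ and expand the integrand as $\|m^\top s_{p_\theta}\|_2^2 - 2\,s_{p_\theta}^\top mm^\top s_{p_0} + \|m^\top s_{p_0}\|_2^2$. The last summand does not depend on $\theta$ and hence contributes only to the additive constant, so it may be dropped; the first already appears in \eqref{eq:DSM-expansion}. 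All the work therefore lies in the cross term, which I would rewrite using the defining identity $p_0\,s_{p_0}=\nabla p_0$ into $-2\int_\X (mm^\top s_{p_\theta})^\top\nabla p_0\,dx$. This removes the unknown score $s_{p_0}$ and leaves an expression amenable to integration by parts.

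Next I would introduce the vector field $G:=p_0\,mm^\top s_{p_\theta}$, apply the product rule $\nabla\cdot G=(\nabla p_0)^\top mm^\top s_{p_\theta}+p_0\,\nabla\cdot(mm^\top s_{p_\theta})$, and integrate over $\X$. The Gauss--Green formula on the Lipschitz domain $\X$ gives $\int_\X\nabla\cdot G\,dx=\int_{\partial\X}p_0\,(mm^\top s_{p_\theta})\cdot v\,dS$, with $v$ the outward unit normal, so that
\[
-2\int_\X (mm^\top s_{p_\theta})^\top\nabla p_0\,dx = -2\int_{\partial\X} p_0\,(mm^\top s_{p_\theta})\cdot v\,dS + 2\int_\X p_0\,\nabla\cdot(mm^\top s_{p_\theta})\,dx.
\]
Because $m$ is diagonal, the surface integrand is $\sum_i p_0\,m_{ii}^2\,\partial_i\log p_\theta\,v_i$, whose trace vanishes on $\partial\X$ by the stated boundary condition, so the surface term is zero. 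Reassembling with the retained quadratic term yields $\E_{X\sim p_0}[\|(m^\top s_{p_\theta})(X)\|_2^2+2\,\nabla\cdot(mm^\top s_{p_\theta})(X)]$, which is exactly \eqref{eq:DSM-expansion} up to the discarded constant.

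The main obstacle is making the Gauss--Green step rigorous on a merely Lipschitz boundary, where the classical divergence theorem does not directly apply. I would invoke its Sobolev extension: the hypotheses $p_0,\partial_i\log p_\theta\in H^1(\X)$, together with the standing boundedness of $m$ and its derivatives, ensure via Cauchy--Schwarz that $G\in L^1(\X)$ and $\nabla\cdot G\in L^1(\X)$ --- for instance the products $(\partial_j p_0)(\partial_j\log p_\theta)$ and $p_0\,\partial_j^2\log p_\theta$ are integrable because each factor lies in $L^2(\X)$ --- which places $G$ in the class for which Gauss--Green holds on Lipschitz domains. The same $H^1$ regularity, through the trace theorem for Lipschitz domains, guarantees that the boundary traces appearing in the surface integral are well defined in the first place, and the hypothesised limit is precisely the condition forcing them to vanish. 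I would close by noting that the $\R^d$ lemma and \cref{th:boundary1} are recovered as special cases, with the decay-at-infinity and half-space boundary conditions there playing the role that the boundary-trace condition plays here.
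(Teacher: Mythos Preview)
The paper does not supply its own proof of this theorem. In Appendix~B.5 the result is presented as a review of the literature: the section opens by saying it will ``review the boundary and smoothness conditions discussed in \cref{sec:DSM-Bayes}, alongside the DSM extension to more general domains $\X$,'' and the theorem is introduced with ``Later, \citet{liu2022estimating} extended the DSM to densities with support in a Lipschitz Domain.'' The statement is then quoted without argument; so there is no in-paper proof to compare against.

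Your sketch is the standard and correct route: expand the squared norm, discard the $\theta$-independent term, rewrite the cross term via $p_0 s_{p_0}=\nabla p_0$, apply the Gauss--Green formula on the Lipschitz domain, and use the boundary hypothesis to kill the surface integral. Two small points are worth tightening. First, you invoke ``the standing boundedness of $m$ and its derivatives''; that is not among the theorem's hypotheses as stated, so if you want a self-contained proof you should either add it as an explicit assumption or argue from the specific diagonal form of $m$ used elsewhere in the paper. Second, the version of Gauss--Green you need is not merely ``$G\in L^1$ and $\nabla\cdot G\in L^1$''; on Lipschitz domains the usual formulation asks for each component $G_i\in W^{1,1}(\X)$ (or a divergence-measure-field variant), and it is the pairing of $p_0\in H^1$ with $\partial_i\log p_\theta\in H^1$ through a product rule in Sobolev spaces that delivers this. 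Your Cauchy--Schwarz remark is the right mechanism, but the conclusion to draw is $G_i\in W^{1,1}$, not just $G\in L^1$, so that both the divergence identity and the trace theorem are simultaneously available.
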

The Sobolev-Hilbert space is defined as follows:
\begin{talign*}
H^{1}(\X) = \big\{ f\in L^2(\X)\,\big|\,\|f\|_{L^2(\X)}^2+\sum_{i=1}^{d} \|D_{i}f\|_{L^2(\X)}^2 <\infty\big\}, 
\end{talign*}
where $D_{i}$ is the weak derivative corresponding to $\partial_{i}$ and $\|f\|_{L^2(\X)} = \sqrt{\int_\X |f(x)|^2 dx}$. 
	
\paragraph{Gamma distribution}
Let $p_{\theta}$ be a gamma distribution, and $\X=\R_{+}$. Assume $p_0$ bounded from above and that $\log p_{0}(x)$ is continuously differentiable almost everywhere on $\R_+$. Then for $m$ as in \cref{Robust m}, we have
\begin{talign*}
    \lim_{|x|\to 0+} p_0(x)m^2(x)\partial\log p_\theta(x)  &= \lim_{|x|\to 0+}p_{0}(x)\dfrac{\nabla r(x)\theta + \nabla b(x)}{1+(\nabla r(x)\theta^{\star})_{i}^{2}} \\
    &= \lim_{|x|\to 0+}p_{0}(x)\dfrac{\frac{\theta_1}{x}-\theta_2}{1+(\frac{\theta_1^{\star}}{x}-\theta_2^{\star})^2} \\&=  \lim_{|x|\to 0+}p_{0}(x)\dfrac{x\theta_1-x^2\theta_2}{x+(\theta_1^{\star}-x\theta_2^{\star})^2} \\
    &= 0.
\end{talign*}
The last equality holds since $p_0$ is bounded. Now, for the second boundary condition:
\begin{talign*}
    \lim_{|x|\to \infty} p_0(x)m^2(x)\partial\log p_\theta(x)  &= \lim_{|x|\to \infty}p_{0}(x)\dfrac{\nabla r(x)\theta + \nabla b(x)}{1+(\nabla r(x)\theta^{\star})_{i}^{2}} = \lim_{|x|\to \infty}p_{0}(x)\dfrac{\frac{\theta_1}{x}-\theta_2}{1+(\frac{\theta_1^{\star}}{x}-\theta_2^{\star})^2} = 0.
\end{talign*}
The last equality holds since $p_0$ a density, therefore, $\lim_{|x|\to \infty}p_{0}(x)=0$. 
Then the $m$ proposed in \cref{Robust m} satisfies the boundary conditions in \cref{th:boundary1} for the gamma distribution. 

\paragraph{Exponential distribution}
Let $p_{\theta}$ be a exponential distribution, and $\X=\R_{+}$. Assume $p_0$ bounded, such that $\log p_{0}(x)$ is continuously differentiable almost everywhere on $\R_+$. Then for $m$ such as in \cref{Robust m} :
\begin{talign*}
    \lim_{|x|\to 0+} p_0(x)m^2(x)\partial\log p_\theta(x)  &= \lim_{|x|\to 0+}p_{0}(x)\dfrac{\nabla r(x)\theta + \nabla b(x)}{1+(\nabla r(x)\theta^{\star})_{i}^{2}}= \lim_{|x|\to 0+}p_{0}(x)= \dfrac{\theta}{1+\theta^{\star2}}\lim_{|x|\to 0+}p_{0}(x).
\end{talign*}
The term for the second limit would be similar:
\begin{talign*}
    \lim_{|x|\to \infty} p_0(x)m^2(x)\partial\log p_\theta(x) &= \dfrac{\theta}{1+\theta^{\star2}}\lim_{|x|\to \infty}p_{0}(x).
\end{talign*}
Therefore, the expression in \cref{th:boundary1} looks as follows:
\begin{talign*}
\lim_{|x^{(i)}|\to \infty} p_{0}(x) m_{ii}^{2}(x) \partial_{i}\log p_{\theta}(x)-\lim_{|x^{(i)}|\to 0+} p_{0}(x) m_{ii}^{2}(x) \partial_{i}\log p_{\theta}(x)=  \dfrac{\theta}{1+\theta^{\star2}}\left(\lim_{|x|\to \infty}p_{0}(x)-\lim_{|x|\to 0+}p_{0}(x)\right).
\end{talign*}
Then the $m$ proposed in \cref{Robust m} satisfies the boundary conditions in \cref{th:boundary1} if

$\lim_{|x|\to \infty}p_{0}(x) = \lim_{|x|\to 0+}p_{0}(x)$. 

\section{Additional Details on Numerical Experiments}
\label{appendix:expDetails} 
In this section we give additional details on the numerical experiments of \cref{sec:experiments}. We provide the exact prior and $\omega$ used in each experiment. Moreover, we present an extra numerical experiment to compare the computational complexity of standard BOCD and $\DSM_m$-BOCD in \cref{app:additional-computation-experiemnt}.
 
\subsection{Computational complexity}
\label{app:additional-computation-experiemnt}
The computational complexity of both standard BOCD and $\DSM_m$-BOCD  is linear in the number of data points. To verify that this theoretical complexity mirrors the practical computational overhead, we generate samples from a Gaussian distribution with 1 CP where the mean varies. We vary the sample size from $T=100$ up to $T=20000$. We fit a Gaussian distribution with the correct variance taken as fixed in both the $\DSM_{m}$-BOCD and the standard BOCD. As shown in \cref{fig:complexity_mean_T_ap}, both methods are equally fast for any number of observations.

Although the computational complexity of $\DSM_m$-BOCD  is linear in the data, it is quadratic in the dimension of the observations, in particular, is $\O(T(p^2+d^2))$. To observe this in practice, we consider the same settings as before, but now we fix the sample size to $T=100$ and vary the data dimensions from $d=1$ up to $d=500$. \cref{fig:complexity_mean_D} shows that both methods take practically the same time when $d$ is less than $100$.

\begin{figure}[h]
\centering
\begin{subfigure}{.48\textwidth}
  \centering
  \includegraphics[width=\linewidth]{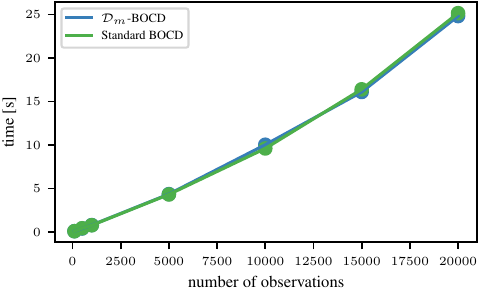}
  \caption{Overall time in seconds versus number of observations. We observe that both methods are equally fast for any number of observations.}
  \label{fig:complexity_mean_T_ap}
\end{subfigure}%
\hfill
\begin{subfigure}{.48\textwidth}
  \centering
  \includegraphics[width=\linewidth]{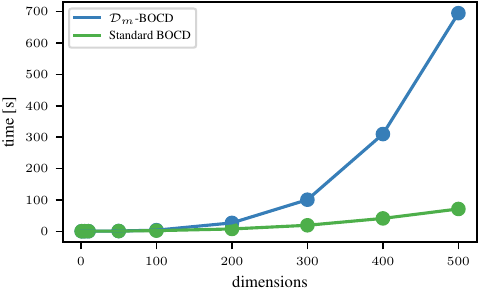}
  \caption{Overall time in seconds versus the dimension of the observations. We observe that both methods are practically equally fast when the dimension of the observations is less than $100$. }
  \label{fig:complexity_mean_D}
\end{subfigure}
\vspace{-0.3cm}
\caption{Comparison between $\DSM_m$-BOCD and the standard BOCD. {\textcolor{blue_plot}{\textbf{blue}}} line for $\DSM_m$-BOCD, and  {\textcolor{green_plot}{\textbf{green}}} line for standard BOCD.}
\label{fig:complexity_mean}
\end{figure}
The previous setting is where $\DSM_m$-BOCD clearly shows its advantage over the $\beta$-BOCD due to its completely closed form, making it nearly equivalent, in terms of computational overhead, to standard BOCD. In more complex settings, the posterior predictive may not be available in closed form; hence, we approximate it by sampling from $\pi_{\omega}^{\DSM_m}$. 
It might not be obvious how this is a significant advantage over the $\beta$-BOCD framework. %
To demonstrate that $\DSM_m$-BOCD is faster than $\beta$-BOCD, we generate samples from a Gaussian distribution with 1 CP, where the mean and the variance change. We vary the sample size from $T=100$ to $T=20000$ and fit a Gaussian distribution in the $\DSM_{m}$-BOCD, $\beta$-BOCD and the standard BOCD. In \cref{fig:complexity_T}, we observe that although the standard BOCD is faster than $\DSM_m$-BOCD, our method is considerably faster than the $\beta$-BOCD for any number of observations.
\begin{figure}[ht]
\centering
\includegraphics[width=.8\columnwidth]{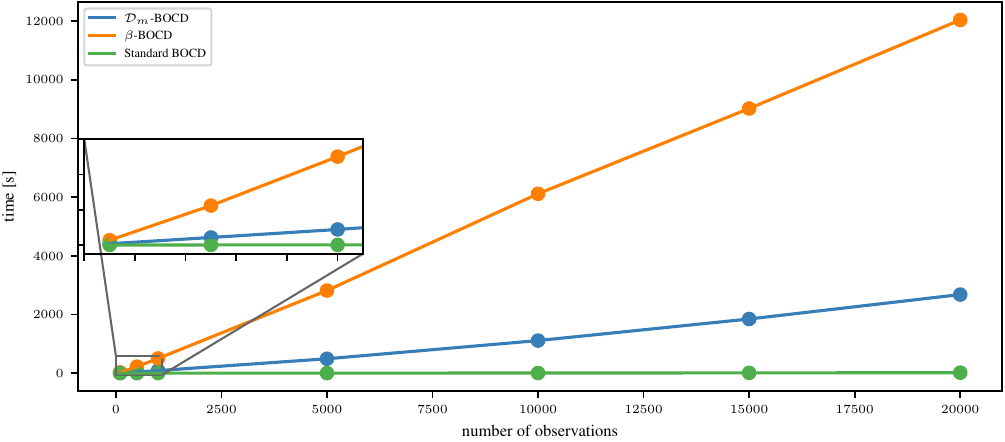}
\vspace{-0.4cm}
\caption{Overall time in seconds versus the number of observations. {\textcolor{blue_plot}{\textbf{blue}}} line for $\DSM_m$-BOCD, {\textcolor{orange_plot}{\textbf{orange}}} line for $\beta$-BOCD, and  {\textcolor{green_plot}{\textbf{green}}} line for standard BOCD. We observe that our method is faster than $\beta$-BOCD but is slower than the standard BOCD.}
\label{fig:complexity_T}
 \end{figure}
 
 \subsection{Varying $m$, $k$, and outliers intensity.}
 
We have demonstrated that our method is insensitive to the scale of the outliers. To do so, we compare the performance of the standard BOCD with the $\DSM_m$-BOCD in different contaminated datasets: while everything else stays the same, we scale the contamination points. We generate 600 samples with 6 CPs at $T\in\{200, 400\}$, and 3 contaminated point at $T\in\{100,300,500\}$. We contaminate the data by adding (or subtracting) 0, 5, 10, 20, and 40. For the $\DSM_m$-BOCD, we choose two different $m$ functions: $m$ as proposed to assure robustness, and $m=I_{d}$. In \cref{fig:varying_epsilon}, we observe that both Standard BOCD and  $\DSM_m$-BOCD with $m=I_d$ mistakenly label outliers as CPs, while $\DSM_m$-BOCD with $m$ robust is robust and identifies lasting changes.

\begin{figure}[h!]
\centering
\includegraphics[width=.9\columnwidth]{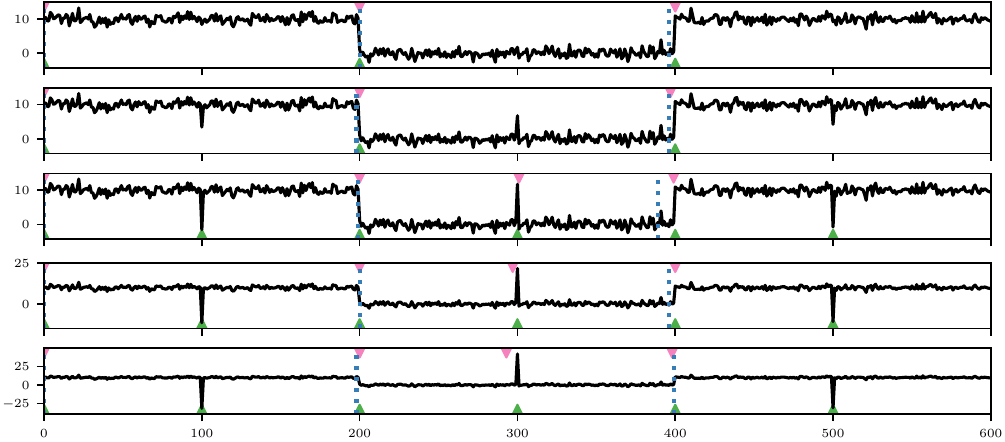}
\vspace{-0.3cm}
\caption{Contamination point scaled by 0, 5, 10, 20, and 40, from top to bottom. MAP segmentation indicated by {\textcolor{blue_plot}{\textbf{blue}}} dashed lines for $\DSM_m$-BOCD with $m$ robust, ${\color{pink_plot}\blacktriangledown}$ for $\DSM_m$-BOCD with $m=I_d$, and  ${\color{green_plot}\blacktriangle}$ for standard BOCD. 
    Both Standard BOCD and  $\DSM_m$-BOCD with $m=I_d$ mistakenly label outliers as CPs, while $\ DSM_m$-BOCD with $m$ robust is robust and identifies lasting changes.}
\label{fig:varying_epsilon}
 \end{figure}

 Finally, for contamination scaled by 10, we compared the performance and wall-clock time for $k\in\{1,50,600\}$. In \cref{fig:varying_k}, we observe that for $k=1$ the method cannot detect any changepoint, while for $k=50$ and $k=600$, the method successfully identifies the changepoints. However, in \cref{tab:varying_k} for $k=600$, the method takes 6x more time. As we discussed in the main paper, $k$ is a parameter in all variants of BOCD and will make all algorithms more expensive if increased.

 \begin{figure}[ht]
        \centering
        \includegraphics[width=.9\columnwidth]{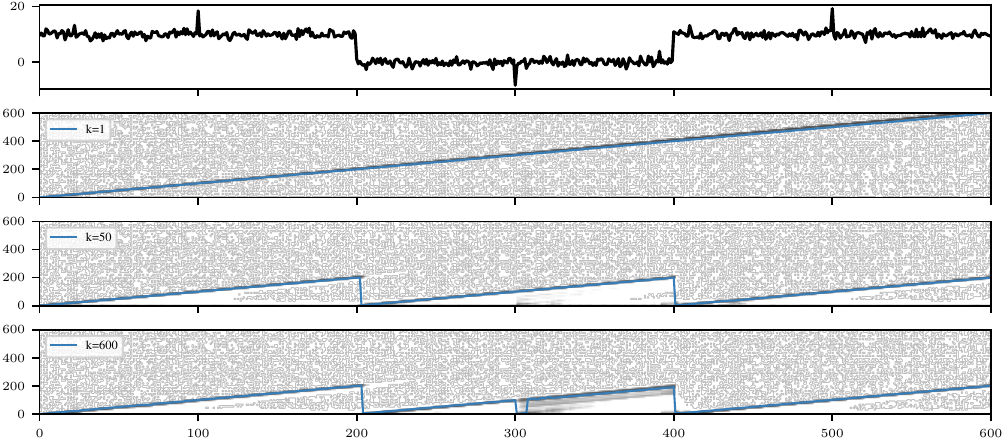}
        \caption{MAP segmentation indicated by {\textcolor{blue_plot}{\textbf{blue}}} dashed lines for $\DSM_m$-BOCD with $m$ robust. For $k = 1$ the method cannot detect any changepoint, while for $k = 50$ and $k = 600$, the method successfully
identifies the changepoints}
\label{fig:varying_k}
\end{figure}
        
    \begin{table}[ht]
    \centering
    \begin{tabular}{@{}ll@{}}
    \toprule
    k   & \multicolumn{1}{c}{time {[}s{]}} \\ \midrule
    1   & \textbf{1}                       \\
    50  & 38                               \\
    600 & 236                              \\ \bottomrule
    \end{tabular}
    \caption{Wall-clock time for varying $k$}
    \label{tab:varying_k}
    \end{table}

\newpage
\subsection{Accuracy and detection delay}
\label{app:accuracy}
We quantify the method’s performance advantage by comparing detection delay and accuracy on artificially generated data with outliers. In particular, We generate 600 samples with 2\% of outliers and 6 CPs; then, we report the positive predictive value (PPV), true positive rate (TPR), and the detection delays. These metrics are given by
\begin{equation*}
    \text{TPR} = \dfrac{\text{TP}}{\text{TP} + \text{FN}}, \quad \quad \text{PPV} = \dfrac{\text{TP}}{\text{TP} + \text{FP}},
\end{equation*}
where TP, FP, and FN are true positives, false positives, and false negatives, respectively. We say the method detects a TP changepoint when the changepoint detected for the method is in a small neighbourhood of the true CP. In order to measure how far the predicted CP is from the true CP, we measure the time difference between both and call it detection delay. For both PPV and TPR, the nearest to 1, the better. For delays, the lower, the better. We run the experiment 10 times, varying the position of the outliers. The following table shows the mean and standard deviation for each metric:

\begin{table}[h!]
\centering
\begin{tabular}{@{}llll@{}}
\toprule
Method        & PPV            & TPR            & Delays        \\ \midrule
DSM-BOCD      & \textbf{0.907$\pm$0.154} & \textbf{0.883$\pm$0.13} & 1.643$\pm$0.475         \\
Standard BOCD & 0.6$\pm$0.128            & 0.833$\pm$0.149          & \textbf{1.05$\pm$1.545} \\ \bottomrule
\end{tabular}
\caption{Performance indices-mean and standard deviation-using the positive predictive value (PPV), true positive rate (TPR), and the detection delays over 10 realisations. For PPV and TPR, the nearest to 1, the better. For delays, the lower, the better.}
\label{tab:accuracy-ap}
\end{table}

In \cref{tab:accuracy-ap}, we observe $\DSM_m$-BOCD has a significantly better performance with respect to PPV, meaning that the rate of false positives is lower than standard BOCD. This is a consequence of the robustness of our method. Moreover, we see a similar result concerning TPR, which measures the amount of changepoint not detected for the methods. Overall, this means that our method detects the same amount of TP as the standard BOCD while not detecting many FP, showing the strength of our method. Lastly, the detection delay shows that in spite of being robust to outliers, $\DSM_m$-BOCD does not cause any delay in the detection of CP.

\subsection{Twitter flash crash \& Cryptocrash}
In the Twitter flash crash experiment, we model the data with a Gaussian distribution, modelling its natural parameters. For $\DSM_m$-BOCD, we use a conjugate squared exponential prior r with parameters $\mu = (0, 1)^{\top}$, and $\Sigma$ a diagonal matrix so that $\diag(\Sigma) = (10, 1)$ for the natural parameters of said Gaussian. 
For standard BOCD, we use a Normal-inverse gamma prior with parameters $\mu_{0} = 0$, $\nu = 1$, $\alpha = 2$ and $\beta = 10$. 
We use the first 50 observations to select $\omega$ as in \cref{sec:DSM-BOCD}, with an  obtained value of $\omega^{\star} \approx 0.0001$

In the Cryptocrash experiment, we model the data with a multivariate Gaussian distribution modelling its natural parameters, and use conjugate squared exponential prior with parameters $\mu = (0, 1, 0, 1)^{\top}$, and $\Sigma$ diagonal matrix such that $\diag(\Sigma) = (2,1,2,1)$ for the $\DSM_m$-BOCD. For  standard BOCD, we model mean and variance instead, and use a normal-inverse-Wishart prior with parameters $\nu=0,\,\kappa= 1,\,\mu = (0, 0)$ and $\Psi$ diagonal matrix such that $\diag(\Psi) = (1,1)$. we manually fix $\omega = 0.01$. \cref{fig:tfx_full} shows the run-length posteriors and the Maximum A Posteriori (MAP) segmentations produced by each method.
Since the most likely run-lengths are virtually identical, the below plot also shows that in spite of being robust to outliers, $\DSM_m$-BOCD does not cause any delay in the detection of CPs.

 \begin{figure}[t!]
     \centering
     \includegraphics{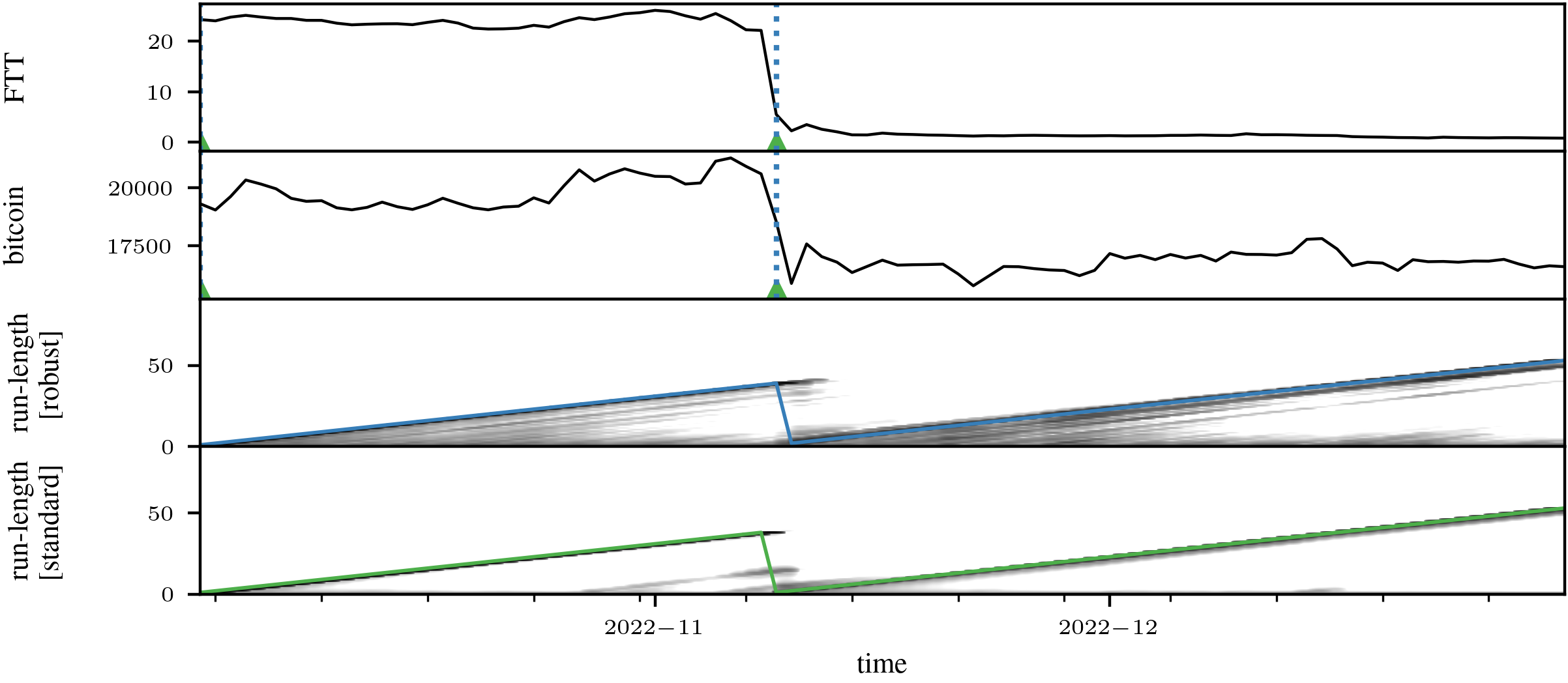}
      \caption{Maximum a posteriori (MAP) segmentation. In {\textcolor{blue_plot}{\textbf{blue}}} $\DSM$-BOCD segmentation using dashed lines. In \textcolor{green_plot}{\textbf{green}} Standard BOCD segmentation using ${\color{green_plot}\blacktriangle}$ marks. In addition we plot the run-length posteriors of robust $\DSM_m$-BOCD algorithm with most likely run-length in {\textcolor{blue_plot}{\textbf{blue}}} and of standard BOCD in \textcolor{green_plot}{\textbf{green}}. We observe that robustness does not lead to increased CP detection latency: both methods detect the CP at the same time.}
     \label{fig:tfx_full}
 \end{figure}
\subsection{Well-log}
We model the data with a Gaussian distribution, modelling its natural parameters and using a conjugate squared exponential prior with parameters $\mu = (0, 10)^{\top}$, and $\Sigma$ diagonal matrix such that $\diag(\Sigma) = (100, 100)$. \cref{fig:wellFull} shows the run-length and the segmentation of each method. We use the first 200 observations to select $\omega$ as in \cref{sec:DSM-BOCD}, with an  obtained value of $\omega^{\star} \approx 0.0004$
\subsection{Multivariate synthetic data}
We generate 1000 samples from a time series with CPs at t = 250, 750. Conditional
on the CPs, the data is generated independently from an exponential in the first a and Gaussian distribution in the second dimension. 
Since both dimensions are exponential family members, their joint distribution is too. On the natural parameters of this joint distribution, we place a conjugate squared exponential prior with parameters $\mu = (1,0,0.5)^{\top}$, and $\Sigma$ diagonal matrix such that $\diag(\Sigma) = (1, 1, 0.2)$. As we do not compare against BOCD on this data set, we simply fix $\omega^{\star} = 0.15$.

\subsection{UK 10 year government bond yield}
We model the data with a Gamma distribution and use a conjugate squared exponential prior with parameters $\mu = (0,1)^{\top}$, and $\Sigma$ diagonal matrix such that $\diag(\Sigma) = (50, 3)$ on its natural parameters. 
We use the first 100 observations to select $\omega$ as in \ref{sec:DSM-BOCD}. The obtained value is $\omega^{\star} \approx 0.05$.

\begin{figure*}[ht]
    \centering
    \includegraphics[width=0.85\columnwidth]{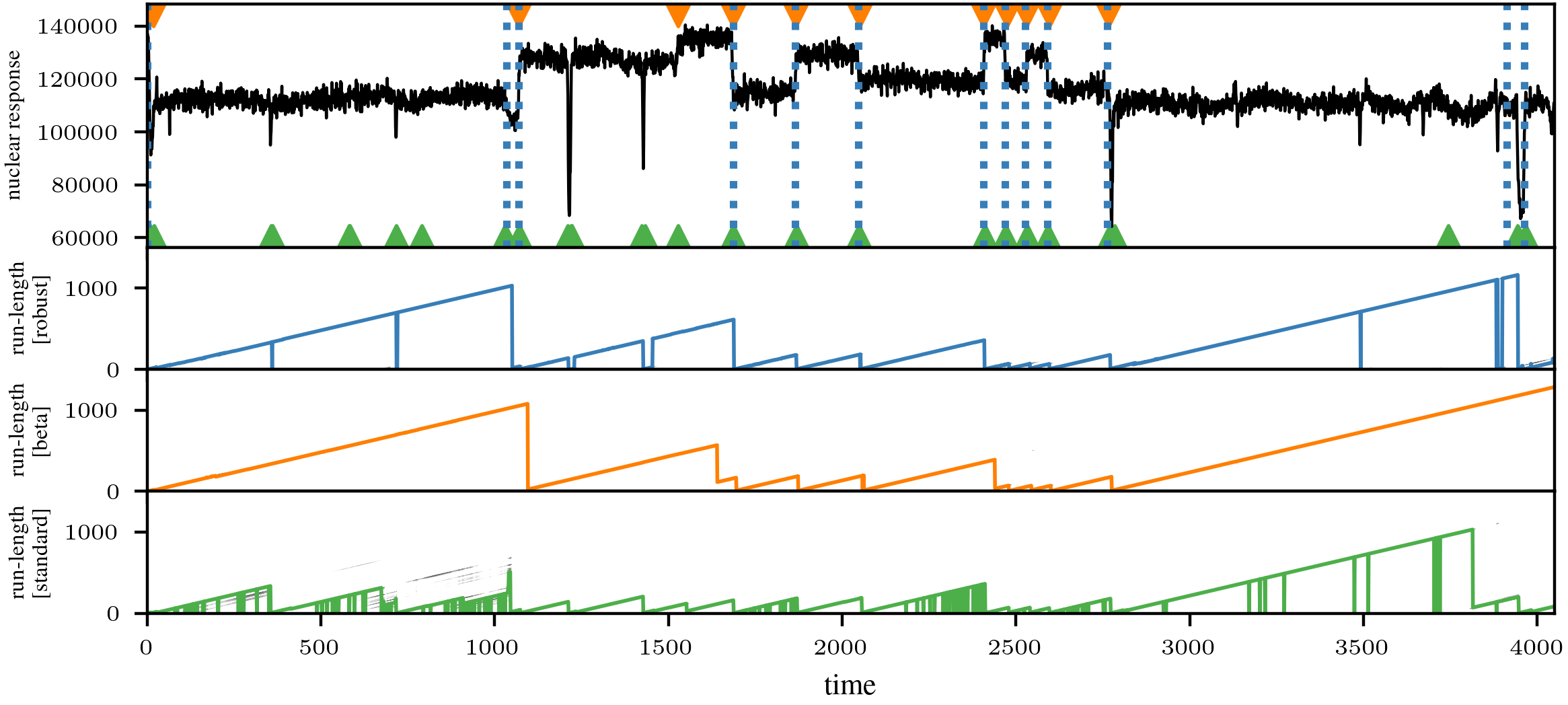}
    \caption{AP segmentation for the well-log indicated by {\textcolor{blue_plot}{\textbf{blue}}} dashed lines for $\DSM_m$-BOCD, ${\color{orange_plot}\blacktriangledown}$ for $\beta$-BOCD, and  ${\color{green_plot}\blacktriangle}$ for standard BOCD. In addition we plot the run-length posteriors of robust $\DSM_m$-BOCD algorithm with most likely run-length in {\textcolor{blue_plot}{\textbf{blue}}}, $\beta$-BOCD in {\textcolor{orange_plot}{\textbf{orange}}}, and of standard BOCD in \textcolor{green_plot}{\textbf{green}}. We observe that our method is more robust to outliers than the standard BOCD, but is more sensitive than $\beta$-BOCD.}
    \label{fig:wellFull}
\end{figure*}

\end{document}